\newcommand{\citet}[1]{\citeauthor{#1} \shortcite{#1}}
\newcommand{\citep}{\cite} 
\newcommand*\diff{\mathop{}\!\mathrm{d}}
\DeclareRobustCommand{\frac}[3][0pt]{{\begingroup\hspace{#1}#2\hspace{#1}\endgroup\over\hspace{#1}#3\hspace{#1}}}
\newtheorem{proposition}{Proposition}
\newtheorem{theorem}{Theorem}
\newtheorem{corollary}{Corollary}
\newtheorem{assumption}{Assumption}
\newtheorem{definition}{Definition}
\newtheorem*{remark}{Remark}
\newtheorem*{noproposition}{Proposition}
\newcommand{\bs}{\boldsymbol}
\title{Modelling Bounded Rationality in Multi-Agent Interactions \\by Generalized Recursive Reasoning}
\author{
Ying Wen$^{1,2}$\footnote{First two authors contribute equally.}\and
Yaodong Yang$^{1,2*}$\and
Jun Wang$^{1,2}$ \\ 
\affiliations
$^1$University College London\\
$^2$Huawei Research \& Development U.K.\\
\emails
\{ying.wen, yaodong.yang,  jun.wang\}@cs.ucl.ac.uk
}
\begin{document}

\maketitle

\begin{abstract}


Though limited in real-world decision making, most multi-agent reinforcement learning (MARL) models assume perfectly rational agents -- a property hardly met  due to individual's cognitive limitation and/or the tractability of the decision  problem. In this paper, we introduce generalized recursive reasoning (GR2) as a novel framework to model agents with different \emph{hierarchical} levels of rationality; our framework enables agents to exhibit varying levels of ``thinking'' ability thereby allowing higher-level agents  to best respond to various less sophisticated learners.  
We contribute both theoretically and empirically. On the theory side, we devise the hierarchical framework of GR2 through probabilistic graphical models and prove  the existence of a perfect Bayesian equilibrium.
 Within the GR2, we propose a practical actor-critic solver, and demonstrate its  convergent property to a stationary point in two-player games through Lyapunov analysis. On the empirical side, we validate our findings on a variety of MARL benchmarks. Precisely, we first illustrate the hierarchical thinking process  on the Keynes Beauty Contest, and then demonstrate significant improvements compared to state-of-the-art opponent modeling baselines on the  normal-form games and  the cooperative navigation benchmark. 

\end{abstract}

 \vspace{-5pt}
\section{Introduction}
\label{intro}

%
%
%
%



In people's decision making, rationality can often be compromised; it can be constrained by either the difficulty of the decision problem or the finite resources available to each individual's mind.
In behavioral game theory, instead of assuming people are perfectly rational, \emph{bounded rationality} \cite{simon1972theories} serves as the alternative modeling basis by recognizing such cognitive limitations.
One most-cited example that bounded rationality prescribes is  Keynes Beauty Contest \cite{Keynes:1936}. 
In the contest,  all players are asked to pick one number from $0$ to $100$, and the player whose guess is closest to $\nicefrac{1}{2}$ of the average number eventually becomes the winner. 
 In this game, if all the players are perfectly rational, the only choice is to guess $0$ (the only Nash equilibrium) because each of them could reason as follows: ``\emph{if all players guess randomly, the average of those guesses would be $50$ ($level$-$0$), I, therefore, should guess no more than $\nicefrac{1}{2} \times 50 = 25$ ($level$-$1$), and then if the other players think similarly as me, I should guess no more than $\nicefrac{1}{2} \times 25=13$ ($level$-$2$) ...}". 
 Such level of recursions can keep developing infinitely until all players guess the equilibrium $0$.
  This theoretical result from the perfect rationality is however inconsistent with the experimental finding in psychology \citep{coricelli2009neural} which suggests that most human players would choose between $13$ and $25$.
  In fact, it has been shown  that human beings  tend to reason only by $1$-$2$ levels of recursions in strategic games \cite{camerer2004cognitive}.
In the  Beauty Contest, players' rationality is bounded and their behaviors are  sub-optimal. As a result, it  would be unwise to guess the Nash equilibrium $0$ at all times.

In the multi-agent reinforcement learning (MARL), one common assumption is that all agents behave rationally \cite{albrecht2018autonomous} during their interactions. For example, we assume agents' behaviors will converge to Nash equilibrium \cite{yang2018mean}. 
However, in practice, it is hard to guarantee that all agents have the same level of sophistication in  their abilities of understanding and learning from each other. 
With the development of MARL methods, agents could face various types of opponents ranging from  naive independent learners~\citep{bowling2002multiagent},  joint-action learners~\citep{claus1998dynamics}, to the complicated   theory-of-mind learners  \citep{rabinowitz2018machine,shum2019theory}. 
It comes with no surprise that the effectiveness of MARL models decreases   when the opponents act irrationally \cite{shoham2003multi}.
On the other hand, it is not desirable to design agents that  can only tackle  opponents that play optimal policies. 
Justifications can be easily found in modern AI applications including self-driving cars \cite{shalev2016safe} or video game designs \cite{peng2017multiagent,hunicke2005case}.
Therefore, it becomes critical for MARL models to acknowledge different levels of bounded rationality.

In this work, we propose a novel framework -- \emph{Generalized Recursive Reasoning (GR2)} -- that recognizes agents'  bounded rationality and thus   can  model their corresponding sub-optimal behaviors.
 GR2 is inspired by cognitive hierarchy theory \cite{camerer2004cognitive}, assuming that agents could possess different levels of reasoning rationality during the interactions.
It begins with $level$-$0$ (L$0$ for short) non-strategic thinkers who do not model their opponents. 
L$1$ thinkers are more sophisticated than $level$-$0$;  they believe  the opponents are all at L$0$ and then act correspondingly.
 With the growth of $k$, L$k$ agents think in an increasing order of sophistication and then take the best response to all possible lower-level opponents.
We immerse the GR2 framework into MARL through graphical models, and derive the practical GR2 soft actor-critic algorithm. 
Theoretically, we prove the existence of Perfect Bayesian Equilibrium in the GR2 framework,  as well as  the convergence of GR2 policy gradient methods on  two-player normal-form games.
Our  proposed GR2 actor-critic methods are evaluated against multiple strong MARL baselines on  Keynes Beauty Contest, normal-form games, and cooperative navigation.
Results justify our theoretical findings and   the effectiveness  of bounded-rationality modeling.

\vspace{-5pt}
\section{Related Work}

Modeling the opponents in a recursive manner can be regarded as a special type of opponent modeling~\citep{albrecht2018autonomous}. 
Recently, studies on Theory of Mind (ToM)~\citep{goldman2012theory,rabinowitz2018machine,shum2019theory} explicitly model the agent's belief on opponents' mental states in the reinforcement learning (RL) setting. 
The I-POMDP framework focuses on building the beliefs about opponents' intentions into the planning and making agents acting optimally with respect to such  predicted intentions ~\citep{gmytrasiewicz2005framework}.
GR2 is different in that it incorporates a hierarchical structure  for  opponent modeling; it can take into account  opponents with different levels of rationality and  therefore can conduct nested reasonings about the opponents (e.g. ``I believe you believe that I believe ... "). 
 In fact, our method is most related to the probabilistic recursive reasoning (PR2) model \cite{wen2018probabilistic}. 
PR2 however only explores the $level$-$1$ structure and it does not target at modeling the bounded rationality. 
 Most importantly, PR2 does not consider whether an equilibrium exists  in such sophisticated hierarchical framework at all.
In this work, we extend the reasoning level to an arbitrary number, and theoretically prove the existence of equilibrium under the GR2  setting  as well as the convergence of the subsequent learning algorithms. 
Decision-making theorists have pointed out that the ability of thinking in a hierarchical manner is one direct consequence of the limitation in decision maker's information-processing power; they demonstrate this result by matching real-world behavioral data with the model that trades off between utility maximization against information-processing costs (i.e. an entropy term applied on the policy) \cite{genewein2015bounded}. 
Interestingly, maximum-entropy framework has  also been explored in the RL domain through inference on graphical models  \cite{levine2018reinforcement}; \emph{soft} Q-learning \cite{haarnoja2017reinforcement} and \emph{soft} actor-critic \cite{haarnoja2018soft} methods were developed.  
Recently, \emph{soft} learning has  been further   adapted into the context of MARL   \cite{wei2018multiagent,tian2019regularized}. 
In this work, we bridge the gap by embedding the solution concept of GR2  into  MARL, and derive the practical GR2 soft actor-critic algorithm. 
By recognizing  bounded rationality, we expect the GR2 MARL methods to generalize  across different types of opponents  thereby showing robustness to  their sub-optimal behaviors, 
which we believe is a critical property for modern  AI applications.  
\vspace{-5pt}
\section{Preliminaries}
\label{preliminaries}

 \emph{Stochastic Game}~\citep{shapley1953stochastic} is a natural framework to describe the $n$-agent decision-making process; it is  typically defined by the tuple $\left \langle \mathcal{S}, \mathcal{A}^1, \dots, \mathcal{A}^n, r, \dots, r^n, \mathcal{P}, \gamma \right \rangle$, where $\mathcal{S}$ represents the state space, $\mathcal{A}^i$ and $r^i(s, a^i, a^{-i})$ denote the action space and reward function of agent $i \in \{1,\dots,n\}$, $\mathcal{P}: \mathcal{S} \times \mathcal{A}^{1} \times \cdots \times \mathcal{A}^{n} \rightarrow \mathcal{P}(\mathcal{S})$ is the transition probability of the environment, and $\gamma \in (0, 1]$ a discount factor of the reward over time.  We assume agent $i$ chooses an action $a^{i} \in \mathcal{A}^{i}$ by sampling its policy $\pi_{\theta^i}^i(a^{i} | s)$ with $\theta^i$ being a tuneable parameter, and use $a^{-i} = (a^j)_{j \neq i}$ to represent actions executed by opponents. The trajectory $\tau^i = \left[(s_1, a^i_1, a^{-i}_1), \dots, (s_T, a^i_T, a^{-i}_T)\right]$  of agent $i$ is defined as a collection of state-action triples over a horizon $T$.

\subsubsection{The Concept of Optimality in MARL}
Analogous to standard reinforcement learning (RL), each agent in MARL attempts to determine an optimal policy maximizing its total expected reward. On top of RL, MARL  introduces additional complexities to the learning objective because the reward now also depends on the actions executed by opponents. 
Correspondingly, the value function of the $i_{th}$ agent in a state $s$ is  $V^i(s; \boldsymbol{\pi_\theta} )= \mathbb { E } _ { \boldsymbol{\pi_\theta}, \mathcal{P} } \left[ \sum _ { t = 1 } ^ { T } \gamma ^ { t-1 } r^i \left( s _ { t } , a _ { t } ^ { i } , a _ { t } ^ { - i } \right) \right]$ where $(a^{i}_t, a^{-i}_t) \sim \boldsymbol{\pi_{\theta}} = (\pi^{i}_{\theta^i}, \pi^{-i}_{\theta^{-i}})$ with $\boldsymbol{\pi_{\theta}}$ denoting the joint policy of all learners. 
As such, \emph{optimal} behavior in a multi-agent setting stands for acting in \emph{best response} to the opponent's policy $\pi^{-i}_{\theta^{-i}}$, which can be formally defined as the policy $\pi^i_{*}$ with $V^i(s; \pi_*^i, \pi^{-i}_{\theta^{-i}}) \ge V^i(s; \pi^i_{\theta^i}, \pi^{-i}_{\theta^{-i}})$ for all valid $\pi^i_{\theta^i}$. 
If all agents act in best response to others, the game arrives at a Nash equilibrium \cite{nash1950equilibrium}.
 Specifically, if  agents execute the policy of the form 
$
\pi^{i}(a^{i} | s)=\frac{\exp \left( Q^{i}_{\boldsymbol{\pi_\theta}}(s, a^{i}, a^{-i})\right)}{\sum_{a^{\prime}} \exp \left( Q^{i}_{\boldsymbol{\pi_\theta}}(s, a^{\prime}, a^{-i})\right)}
$
 -- a standard type of policy  adopted in RL literatures -- 
with
$ Q^{i}_{\boldsymbol{\pi_\theta}}(s, a^{i}, a^{-i}) = r^i(s, a^i, a^{-i}) + \gamma \mathbb{E}_{\mathcal{P}} [V^i(s'; \boldsymbol{\pi_\theta})]$ denoting agent $i$'s Q-function and $s^{\prime}$ being a successor state,  
they reach a Nash-Quantal equilibrium \cite{mckelvey1995quantal}.

\subsubsection{The Graphical Model of MARL}
Since GR2 is a probabilistic model, it is instructive to provide a brief review of graphical model for MARL. 
In single-agent RL, finding the optimal policy  can be equivalently  transferred into an inference problem on a graphical model \cite{levine2018reinforcement}. 
%
%
Recently, it has been shown that such equivalence also holds in the multi-agent setting \cite{tian2019regularized,wen2018probabilistic}.
To illustrate, we first introduce a  binary random variable $\mathcal{O}_{t}^{i} \in \{0,1\}$  (see Fig.~\ref{fig:chk}) that stands for the optimality of agent $i$'s policy at time $t$,  i.e., 
$p\left(\mathcal{O}_{t}^{i}=1 | \mathcal{O}_{t}^{-i}=1, \tau_t^i \right) \propto \exp \left(r^i\left(s_{t}, a_{t}^{i}, a_{t}^{-i}\right)\right)$, which suggests that given a trajectory $\tau_i^t$, the probability of being optimal is proportional to the reward.  
In the fully-cooperative setting, if all agents play optimally, then agents  receive the maximum reward that is also the Nash equilibrium; therefore, for agent $i$, it aims to maximize $p(\mathcal{O}_{1:T}^i = 1|\mathcal{O}_{1:T}^{-i}=1)$ as this is the probability of obtaining the maximum cumulative reward/best response towards Nash equilibrium.  
For simplicity, we omit the value for $\mathcal{O}_t$ hereafter. 
As we assume no knowledge of the optimal policies $\boldsymbol{\pi_{*}}$ and the model of the environment $\mathcal{P}(\mathcal{S})$, we treat them as latent variables and applied variational inference \cite{blei2006variational} to approximate such objective;  
using the variational form of $\hat{p}(\tau^{i} | \mathcal{O}^i_{1:T}, \mathcal{O}^{-i}_{1:T})=[\hat{p}(s_{1}) \prod_{t=1}^{T-1} \hat{p}(s_{t+1} | s_{t}, a_{t}^{i}, a_{t}^{-i})] \boldsymbol{\pi_{\theta}}(a_{t}^{i}, a_{t}^{-i} | s_{t}) $ leads to  
%
%
%
%
%
\vspace{-0pt}
{\small
\begin{align}
    \label{obj1}
    &\max \mathcal{J}({\boldsymbol{\pi_{\theta}}}) = \log p(\mathcal{O}_{1:T}^i = 1|\mathcal{O}_{1:T}^{-i}=1)\\  & \quad  \ge \sum_{\tau^i} \hat{p}(\tau^{i} | \mathcal{O}^i_{1:T}, \mathcal{O}^{-i}_{1:T})  \log \dfrac{p( \mathcal{O}^{i}_{1:T},\tau^{i} | \mathcal{O}^{-i}_{1:T})} {\hat{p}(\tau^{i} | \mathcal{O}^i_{1:T}, \mathcal{O}^{-i}_{1:T})  }\nonumber \\  &  \quad=\sum_{t=1}^{T} \mathbb{E}_{ \tau^{i} \sim \hat{p}(\tau^{i})} \bigg[r^{i}\big(s_{t}, a_{t}^{i}, a_{t}^{-i}\big) + \mathcal{H}\big( \boldsymbol{\pi_{\theta}} (a_{t}^{i}, a_{t}^{-i} | s_{t}) \big)\bigg]. \nonumber
\end{align} 
}

\vspace{-10pt}
To maximize $\mathcal{J}({\boldsymbol{\pi_{\theta}}}) $, a variant of policy iteration called \emph{soft learning} is applied. 
For policy evaluation, Bellman expectation equation now holds on the \emph{soft} value function 
$
V ^ i ( s )  = \mathbb{E}_{\boldsymbol{\pi_\theta} }  \left[Q^{i} (s_t, a_t^i, a_t^{-i}) -  \log ( \boldsymbol{\pi_\theta} ( a ^{i}_ { t }, a ^{-i}_ { t } | s _ { t } ) )  \right]
$, 
with the updated Bellman  operator 
$
\mathcal{T}^ { \pi } Q^{i} (s_t, a_t^i, a_t^{-i})  \triangleq r^{i}(s_t, a_t^i, a_t^{-i})    +   \tiny{\gamma \mathbb{E}_{\mathcal{P}}
\left[ \operatorname{soft} Q(s_t, a_t^{i}, a_t^{-i})\right]}.   
$
Compared to the $\max$ operation in the normal Q-learning, $\operatorname{soft}$ operator is $ \operatorname{soft} Q(s, a^i, a^{-i}) = \log \sum_{a} \sum_{a^{-i}} \exp { \left( Q ( s   , a^i, a^{-i} ) \right)}  \approx \max _ {  a^i  , a^{-i} } Q \left(  s  , a^i , a^{-i} \right)$.  
Policy improvement however becomes non-trivial because the  Q-function now guides the  improvement direction for the joint policy rather than for each single agent.
Since the exact parameter of opponent policy is usually unobservable, agent $i$  needs to approximate $\pi^{-i}_{\theta^{-i}}$.



%



\begin{figure}[t!]
     \centering
\includegraphics[width=.43\textwidth]{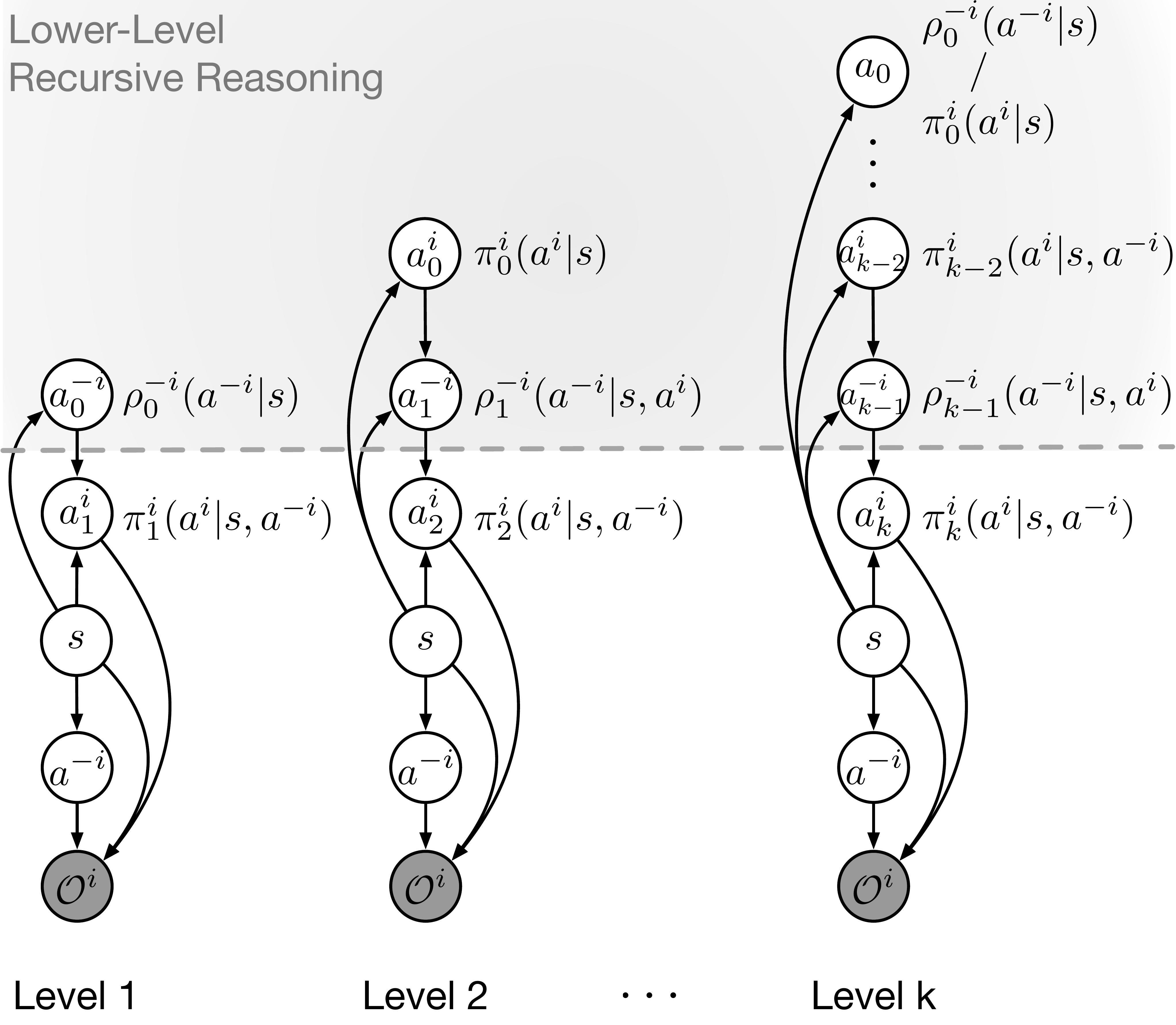}
          \vskip -7pt
     \caption{Graphical model of the $level$-$k$ reasoning model. Subfix of $a_*$  stands for the level of thinking not the timestep. The  opponent policies are approximated by $\rho^{-i}$. The omitted $level$-$0$ model  considers opponents fully randomized. Agent $i$ rolls out the recursive reasoning about opponents in its mind (grey area). In the recursion, agents with higher-level beliefs take the best response to the lower-level agents. 
}
\label{fig:chk}
      \vskip -10pt 
\end{figure}

\vspace{-5pt}
\section{Generalized Recursive Reasoning}
Recursive reasoning is essentially taking iterative best response to opponents' policies. 
$level$-$1$ thinking is ``I know you know how I know".  
We can represent such recursion by  $\boldsymbol{\pi}(a^i, a^{-i} | s) = \pi^{i}(a^{i}| s)\pi^{-i}(a^{-i}| s, a^{i})$ where  $\pi^{-i}(a^{-i}| s, a^{i})$ stands for the opponent's consideration of agent $i$'s action  $a^i \sim  \pi^{i}(a^{i}| s)$.
The unobserved opponent conditional policy $\pi^{-i}$ can be approximated via a best-fit model  $\rho^{-i}_{\phi^{-i}}$  parameterized by $\phi^{-i}$. 
By adopting $\boldsymbol{\pi_{\theta}}(a^i, a^{-i} | s) = \pi^{i}_{\theta^i}(a^{i}| s)\rho_{\phi^{-i}}^{-i}(a^{-i}| s, a^{i})$ in $\hat{p}(\tau^{i} | \mathcal{O}^i_{1:T}, \mathcal{O}^{-i}_{1:T})$ in maximizing the  Eq.~\ref{obj1}, we can solve the best-fit opponent policy by 

\vspace{-10pt}
\begin{equation}
    \label{eq:rho_pr2}
    \rho^{-i}_{\phi^{-i}}(a^{-i}| s, a^{i}) \propto  \exp \left( Q_{\boldsymbol{\pi_{\theta}}} ^ { i } (s, a^{i}, a^{-i} ) - Q_{\boldsymbol{\pi_{\theta}}} ^ { i} (s, a^{i} ) \right)   .
\end{equation}
We provide the detailed derivation  of Eq.~\ref{eq:rho_pr2} in \emph{Appendix \ref{app:eq2}}. 
Eq.~\ref{eq:rho_pr2} suggests that  agent $i$ believes his opponent will act in his interest in the cooperative games.
Based on the  opponent model in  Eq.~\ref{eq:rho_pr2}, agent $i$ can learn the best response policy by considering all possible opponent agents' actions:     
$    Q^i(s, a^i) =  \int _ { a^{-i} } \rho^{-i}_{\phi^{-i}}(a^{-i} | s, a^i) Q^i(s, a^i, a^{-i})\diff a^{-i}$, and then improve its own policy towards the direction of  
\vspace{-5pt}
{
\begin{equation}
\label{policy}
\footnotesize{
\begin{aligned}
\pi^{\prime}=\arg \min _{\pi^{\prime}} D_{\mathrm{KL}}\Bigg[\pi^{\prime}(\cdot | s_{t}) \bigg| \bigg| \dfrac{\exp ( Q^{i}_{\pi^i,\rho^{-i}}(s_{t}, a^i, a^{-i}) )}{\sum_{a'} \exp ( Q^{i}(s_{t}, a', a^{-i}) )}\Bigg].
\end{aligned}
}
\end{equation}
}%
\vspace{-5pt}




%
%


\subsubsection{Level-k Recursive Reasoning -- GR2-L}
Our goal is to extend the recursion to the $level$-$k$ ($k \ge 2$) reasoning (see Fig.~\ref{fig:chk}). 
In brief, each agent operating at level $k$ assumes that other agents are using $k-1$ level policies  and then acts in  best response.  We name this approach \textbf{GR2-L}.
%
In practice, 
the $level$-$k$ policy can  be constructed by integrating over all possible best responses from  lower-level policies    

{
\vspace{-10pt}
{\small
\begin{align}
\label{recur}
&\pi _ {k} ^ {i } ( a_k^ { i } | s ) \propto  \int_{a_ {k-1}^{-i}}  \bigg\{  \pi _ {k} ^ {i } ( a_ {k} ^ { i } | s, a_ {k-1} ^ { -i } ) \\
&\cdot \underbrace{\int_{a_{k-2} ^ {i}}  \Big[ \rho  _ {k-1}  ^ { -i } (  a _ {k-1}^ { -i } | s, a_ {k-2} ^ { i  } )   \pi _ {k-2} ^ {i} ( a_{k-2} ^ {i} | s) \Big] \diff a _ {k-2}^ { i }}_{\text{opponents of level k-1 best responds to  agent $i$ of level k-2}} \bigg\}\diff a_ {k-1}^{-i}. \notag
\end{align}
}%
}%
  When the levels of reasoning develop, we could think of  
the marginal policies $\pi_{k-2}^{i}(a^{i}|s)$   from lower levels  as the \emph{prior} and the conditional policies 
$\pi_{k}^{i}(a^{i}|s, a^{-i})$  as the \emph{posterior}.
From agent $i$'s  perspective, it believes that the opponents will take the best response to its own fictitious action $a^i_{k-2}$ that are two levels below, i.e.,  
$\rho_ {k-1} ^ {-i } (a_ {k-1}^{-i} | s) = \int \rho_ {k-1} ^ {-i } ( a_ {k-1} ^ { -i } | s, a^i_{k-2} )  \pi _ {k-2} ^ {i } ( a^i_{k-2} | s)\diff a _ {k-2}^ { i }$, 
 where $\pi_{k-2}^i$ can be further expanded by recursively using Eq.~\ref{recur} until meeting $\pi_ {0}$ that is usually assumed uniformly distributed. 
 Decisions are taken in a sequential manner. As such, $level$-$k$ model transforms the  multi-agent planning problem into a hierarchy of nested single-agent planning problems. 
%

\subsubsection{Mixture of Hierarchy Recursive Reasoning -- GR2-M}
So far, $level$-$k$ agent assumes all the opponents are at level $k-1$ during the reasoning process. We can further generalize the model to let each agent believe that the opponents can be much less sophisticated and they are distributed over all lower hierarchies ranging from $0$ to $k-1$ rather than only the level $k-1$, and then find the corresponding best response to such mixed type of agents. We name this approach \textbf{GR2-M}.

Since more computational resources are required with increasing $k$, e.g., human beings show limited amount of working memory ($1-2$ levels on average) in strategic thinkings \cite{devetag2003games}, it is reasonable to restrict the reasoning so that  fewer agents are willing to conduct the  reasoning beyond $k$ when $k$ grows  large.  We thus assume that  
\begin{assumption}
With increasing $k$, $level$-$k$ agents have an accurate guess about the relative proportion of agents who are doing lower-level thinking than them.
\label{assum}
\end{assumption}
The motivation of such assumption is to ensure that when $k$ is large,  there is no benefit for  $level$-$k$ thinkers to reason even harder to higher levels (e.g. level $k+1$), as they will almost have the same belief  about the proportion of lower level thinkers, and subsequently make similar decisions. 
In order to meet  Assumption~\ref{assum}, we choose to model the distribution of reasoning levels by  the Poisson distribution $f ( k ) = \frac{e ^ { - \lambda } \lambda ^ { k }}{ k !} $ where   $\lambda$ is the mean.
 A nice property of Poisson is that $f(k)/f(k-n)$ is inversely proportional to $k^n$ for  $1 \leq n < k$, which satisfies our need that high-level thinkers should have no incentives to think even harder.  
 We can now mix all $k$ levels' thinkings $\{\hat{\pi}^i_k\}$ into  agent's belief about its opponents at lower levels by 
\vspace{-3pt}
{\small
\begin{align}
\label{eq:chk-poisson}
&{\pi}  ^ { i, \lambda } _ {k} (a_k^i | s, a_{0:k-1}^{-i}) \notag\\
&:= \frac{e ^ { - \lambda }}{Z} \bigg( \frac{\lambda ^ { 0 }}{  0 !} {\hat{\pi}} _ {0}  ^ {i} (a_0^i | s)+ 
\cdots + \frac{ \lambda ^ { k }}{ k !} {\hat{\pi}} _ {k}  ^ {i} (a_k^i | s, a_{0:k-1}^{-i}) \bigg),
\end{align}
}%

\vspace{-10pt}
\noindent where the term  $Z=\sum_{n=1}^{k}\nicefrac{e^{-\lambda}\lambda^n}{n!}$. In practice, $\lambda$ can be set as a hyper-parameter, similar to  TD-$\lambda$ \cite{tesauro1995temporal}.

Note that GR2-L is a special case of GR2-M. As the mixture in GR2-M is Poisson distributed, we have $\frac{f(k-1)}{f(k-2)}=\frac{\lambda}{k-1}$; the model will bias towards the $k-1$ level when $\lambda \gg k$. 

\subsubsection{Theoretical Guarantee of GR2 Methods}



Recursive reasoning is essentially to let each agent take the best response to its opponents at different hierarchical levels.
A natural question to ask is does the equilibrium ever exist in GR2 settings? If so, will the learning methods ever converge? 

Here we demonstrate our \textbf{theoretical contributions} that 1) the dynamic game induced by GR2 has Perfect Bayesian Equilibrium; 2) the learning dynamics of policy gradient in GR2 is asymptotically stable in the sense of Lyapunov. 



%
%
%
%
%
\begin{theorem}
GR2 strategies extend a norm-form game into extensive-form game, and there exists a Perfect Bayesian Equilibrium (PBE) in that game.  
\label{theorempeb}
\end{theorem}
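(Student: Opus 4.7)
My plan is to establish the two claims in sequence. For the first claim, I would argue that the GR2 reasoning hierarchy naturally induces an extensive-form game with incomplete information. Specifically, I let Nature move first, drawing each player's ``thinking level'' $k^i$ from the Poisson distribution $f(k) = e^{-\lambda}\lambda^k/k!$; each player observes its own level but not that of its opponents. The hierarchical reasoning of Eq.~\ref{recur} then unfolds as a sequential decision process in the player's mind: a level-$k$ player chooses $a_k^i$ after hypothesising actions $a_{k-1}^{-i}$ of lower-level opponents, which in turn assume actions $a_{k-2}^i$ two levels below, and so on down to the non-strategic base $\pi_0$. Packaging this recursion as a game tree whose nodes correspond to hypothetical actions at each reasoning level, equipped with information sets reflecting the unobserved types, yields a finite extensive-form Bayesian game, proving the first half.

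For the second half, I would construct a PBE explicitly by recursion on the reasoning level. The prior over types is fixed by the Poisson distribution, which is common knowledge by Assumption~\ref{assum}; posterior beliefs are derived by Bayes' rule on the equilibrium path and set to the prior at off-path information sets (a standard PBE tie-breaking choice). Starting from the uniform $\pi_0^i$, I inductively define $\pi_k^i$ as the softmax best response of Eq.~\ref{policy} against the opponent mixture assembled as in Eq.~\ref{eq:chk-poisson}. At every level, the softmax form ensures the policy is a well-defined, continuous function of the Q-values, and the Poisson weighting in Eq.~\ref{eq:chk-poisson} is summable because $\lambda^n/n!\to 0$, so the construction terminates (or converges) at finite effective depth.

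With strategies and beliefs both in hand, I would verify the two PBE axioms: (i) \emph{sequential rationality}, which holds by construction because each $\pi_k^i$ is, by Eq.~\ref{policy}, the KL-minimiser against the soft Boltzmann target derived from the assumed lower-level mixture, hence a best response to its belief; and (ii) \emph{belief consistency}, which holds since beliefs coincide with the Poisson prior together with the Bayes-rule update at every reachable information set. The main obstacle, as I see it, will be the treatment of information sets that are unreachable under the inductively constructed strategies: sequential rationality still demands sensible play there. I would handle this either by invoking the Fudenberg--Tirole freedom in specifying off-path beliefs in a PBE, or, more robustly, by perturbing each softmax policy with a trembling-hand $\varepsilon$-mixture, applying Kakutani's theorem to the resulting continuous best-response correspondence on the compact simplex of behavioural strategies, and passing to the $\varepsilon\to 0$ limit — the continuity of the softmax map in Eq.~\ref{eq:rho_pr2} and the finiteness of the reasoning hierarchy make this limit well-posed.
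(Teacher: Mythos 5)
Your construction of the extensive-form game matches the paper's: both of you unfold the level-$k$ recursion into a game tree whose nodes are intermediate reasoning actions, attach a prior over opponent levels (your Poisson draw by Nature corresponds to the paper's weights $\lambda_{\tilde{k}}$ with $\sum_{\tilde{k}}\lambda_{\tilde{k}}=1$), and define sequential rationality and consistency relative to that type prior. The divergence is in how existence is established, and your primary route has a gap. You claim sequential rationality ``holds by construction'' because each $\pi^i_k$ is the KL-minimiser against the soft Boltzmann target of Eq.~\ref{policy}, ``hence a best response to its belief.'' That inference is not valid for the equilibrium concept being proved: the KL-minimiser of Eq.~\ref{policy} is the \emph{entropy-regularised} (quantal, soft) best response, which places positive probability on every action and is in general strictly suboptimal for the unregularised expected payoff $\sum_{\tilde{k}}\lambda_{\tilde{k}}\,p^i_{\tilde{k}}(h)\,u^i(\cdot)$ over which sequential rationality quantifies. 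The paper itself draws this distinction in the preliminaries: softmax play yields a Nash--Quantal equilibrium, not a Nash equilibrium. So your inductively constructed profile is at best a PBE of the entropy-regularised game; to make the constructive route work you would need either to redefine all payoffs to include the entropy term, or to send the temperature to zero and control that limit, neither of which you do.

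Your fallback, by contrast, is essentially the paper's actual proof: following Kreps and Wilson, the paper restricts to $\frac{1}{m}$-perturbed (totally mixed) behavioural strategies, obtains equilibria of the perturbed games on a compact strategy space, passes to a limit point as $m\to\infty$, and derives a contradiction if the limit failed sequential rationality; consistency then follows from Bayes' rule on the totally mixed approximants. Your trembling-hand $\varepsilon$-mixture with Kakutani and the $\varepsilon\to 0$ limit is the same argument in different clothing, and it should be promoted from fallback to main line. One caution when you do so: sequential rationality of the limit must be verified by a continuity-plus-contradiction estimate on the expected payoffs (as the paper does with its $b/4$ bounds), not merely asserted, because best-response properties are not automatically preserved under pointwise limits of strategy--belief pairs.
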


\vspace{-10pt}
\begin{proof}[Proof (of sketch)]See \emph{Appendix~\ref{peb}} for the full proof.
We can  extend the $level$-$k$  reasoning procedures at one state to an extensive-form game with perfect recall. 
We prove  the existence of PBE by showing both the requirements of \emph{sequentially rational} and \emph{consistency}  are met.
\end{proof}
\vspace{-5pt}

\begin{theorem}
In two-player normal-form games, if these exist a mixed strategy equilibrium, under mild conditions, the convergence of GR2 policy gradient to the equilibrium is asymptotic stable in the sense of Lyapunov. 
\label{main_t}
\end{theorem}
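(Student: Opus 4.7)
The plan is to translate the GR2 soft actor--critic update into a continuous-time gradient dynamical system on the parameter space, identify the mixed strategy equilibrium as an interior fixed point, and then certify asymptotic stability by exhibiting a local Lyapunov function. Specifically, for a two-player normal-form game with parameters $\theta^1,\theta^2$ and soft objective $\mathcal{J}^i(\theta^1,\theta^2)$ (the entropy-regularized expected reward obtained in Eq.~\ref{obj1}), I would write the GR2 policy-gradient flow as
\begin{equation*}
\dot{\theta}^i \;=\; \nabla_{\theta^i}\, \mathcal{J}^i\!\big(\pi^i_{\theta^i},\,\rho^{-i}_{k-1}(\theta^{-i})\big),\qquad i\in\{1,2\},
\end{equation*}
where $\rho^{-i}_{k-1}$ is the level-$(k{-}1)$ opponent model obtained by unrolling the recursion in Eq.~\ref{recur}. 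Let $(\theta^1_\star,\theta^2_\star)$ be the mixed equilibrium; by Theorem~\ref{theorempeb} it exists, and by the soft Bellman consistency this point is a zero of the above vector field.

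Next I would linearize the dynamics at $(\theta^1_\star,\theta^2_\star)$, yielding a Jacobian block matrix
\begin{equation*}
J \;=\; \begin{pmatrix} A_{11} & A_{12} \\ A_{21} & A_{22} \end{pmatrix},
\end{equation*}
where $A_{ii}=\nabla^2_{\theta^i\theta^i}\mathcal{J}^i$ is strictly negative definite because the entropy term $\mathcal{H}(\pi^i_{\theta^i})$ is strongly concave in $\theta^i$, and $A_{ij}$ ($i\ne j$) is the cross-derivative carried by the recursive belief on the opponent. The crucial structural observation I would exploit is that, unrolling the recursion in Eq.~\ref{recur}, each cross-term $A_{ij}$ is premultiplied by a factor proportional to the sensitivity of $\rho^{-i}_{k-1}$ through $k{-}1$ nested best-response softmaxes; the contractive nature of the soft-max best-response operator (a standard consequence of the entropy bonus, with Lipschitz constant strictly less than one) forces $\|A_{ij}\|$ to be bounded by a quantity that can be made strictly smaller than the smallest eigenvalue of $-A_{ii}$. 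This diagonal-dominance argument is the heart of the proof.

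With diagonal dominance in hand, I would take as Lyapunov candidate the local quadratic form
\begin{equation*}
V(\theta) \;=\; \tfrac{1}{2}\big\|\theta^1-\theta^1_\star\big\|^2 + \tfrac{1}{2}\big\|\theta^2-\theta^2_\star\big\|^2,
\end{equation*}
which is positive definite and vanishes only at the equilibrium. Differentiating along trajectories and substituting the linearized dynamics gives $\dot V = (\theta-\theta_\star)^\top\! J\,(\theta-\theta_\star) + o(\|\theta-\theta_\star\|^2)$; the diagonal-dominance bound on $J$ implies $J+J^\top \prec 0$ in a neighborhood of $\theta_\star$, so $\dot V<0$ off equilibrium. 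Standard Lyapunov theory then yields local asymptotic stability. I would formalize the \emph{mild conditions} as: (i) the entropy coefficient is strictly positive, (ii) payoffs are bounded so the best-response operator is globally Lipschitz, and (iii) the equilibrium is interior (so softmax derivatives are well-defined).

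The main obstacle, and the step I would spend the most care on, is bounding the norms of the cross-blocks $A_{ij}$ through the $k$-fold nested recursion of Eq.~\ref{recur}: naively the sensitivity could blow up with $k$, so I must argue that the soft-Bellman best-response map is a strict contraction, which geometrically damps the recursive sensitivity and lets the negative-definite $A_{ii}$ dominate. Everything else---existence of the equilibrium, smoothness of the vector field, invocation of Lyapunov's direct method---is routine once this contraction estimate is in place.
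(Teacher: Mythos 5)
Your overall template (cast the update as a continuous-time flow, linearize at the equilibrium, exhibit a Lyapunov function) matches the paper's, but the mechanism you invoke for stability is not the one that actually operates, and the key estimate you lean on would fail. The paper works directly on the strategy simplex of a two-player \emph{two-action} game: the level-$0$ (independent gradient) Jacobian has zero diagonal and off-diagonal entries $u_r,u_c$ with $u_ru_c<0$, i.e.\ purely imaginary eigenvalues, and with the weighted Lyapunov function $F=\tfrac12(u_cx^2-u_ry^2)$ one gets $\dot F\equiv 0$ --- a center, no convergence. What stabilizes the system at level $k\ge 1$ is the look-ahead term in the gradient: it injects a diagonal entry $\zeta u_ru_c<0$ into the Jacobian, giving $\dot F=\zeta u_ru_c(u_cx^2-u_ry^2)<0$. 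You instead attribute the negative-definite diagonal blocks $A_{ii}$ to the strong concavity of the entropy term. If that were the operative mechanism, level-$0$ soft independent learners would already converge, contradicting both the content of the theorem and the paper's own analysis (and the cycling of independent learners on the Rotational Game). The entropy term plays no role in the paper's Lyapunov computation.

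Second, your diagonal-dominance step is the part that breaks. The cross-blocks are not damped by the recursion: at every level the off-diagonal coupling remains the raw game coupling ($u_r$ and $u_c$, an $O(1)$ quantity determined by the payoff matrices), while the stabilizing diagonal produced by the look-ahead is only $O(\zeta)$ with $\zeta$ small. So $J+J^\top\prec 0$ with the unweighted Euclidean quadratic form is generically false; already at level $0$ one has $\tfrac{d}{dt}\tfrac12(x^2+y^2)=xy(u_r+u_c)$, which is not sign-definite. The correct move, which the paper makes, is not to \emph{dominate} the coupling but to \emph{neutralize} it: choose the weights $u_c$ and $-u_r$ so that the skew part of the dynamics contributes exactly zero to $\dot F$, and let the small negative diagonal supplied by the recursion do the rest. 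The ``mild condition'' is then that $\zeta$ be small enough that the level-$k$ coefficient $k-1+\cdots+\zeta^{k-1}(u_ru_c)^{k-2}$ stays positive --- a condition on the look-ahead magnitude and recursion depth, not on an entropy coefficient or a softmax Lipschitz constant. Your ambition to argue in general parameter space is broader than the paper's $2\times 2$ setting, but without replacing diagonal dominance by a skew-cancelling weighted Lyapunov function the argument does not close even there.
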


\begin{table*}[t!]
 \vskip -30pt
\caption{The Converging Equilibrium on  Keynes Beauty Contest.}
\label{table:beauty}
\vspace{-10pt}
\begin{center}
\begin{small}
\begin{sc}
\resizebox{1. \linewidth}{!}{
\begin{tabular}{lrrrrrrrrrrr}
\hline
\multicolumn{2}{l}{recursive depth }   & Level 3 & \multicolumn{1}{|c}{Level 2} & \multicolumn{3}{|c}{Level 1} & \multicolumn{4}{|c}{Level 0} \\ \hline
 Exp. Setting  &  Nash  & GR2-L3 & GR2-L2 & GR2-L1& PR2 & DDPG-ToM & MADDPG  & DDPG-OM & MASQL & DDPG \\ 
    
\hline
$p=0.7,n=2$  & $0.0$&    $\bs{0.0}$   &  $\bs{0.0}$      &  $\bs{0.0}$     &   $4.4$ &   $7.1$    &   $10.6$         & $8.7$        &  $8.3$ & $18.6$         \\ 
$p=0.7,n=10$ &$0.0$&   $\bs{0.0}$    &   $0.1$    &  $0.3$     &  $9.8$ &   $13.2$     &  $18.1$       & $12.0$ &     $8.7$  &  $30.2$     \\ 
$p=1.1,n=10$& $100.0$  &  $99.0$     &  $94.2$     &   $92.2$    &  $64.0$  &   $63.1$     &   $68.2$        & $61.7$ &  $87.5$ &   $52.2$       \\ 
\hline                             
\end{tabular}
}
\end{sc}
\end{small}
\end{center}
 \vspace{-10pt}
\end{table*}
\begin{figure*}[t!]
     \centering
     \begin{subfigure}[l]{.47\textwidth}
         \centering
         \includegraphics[width=\textwidth]{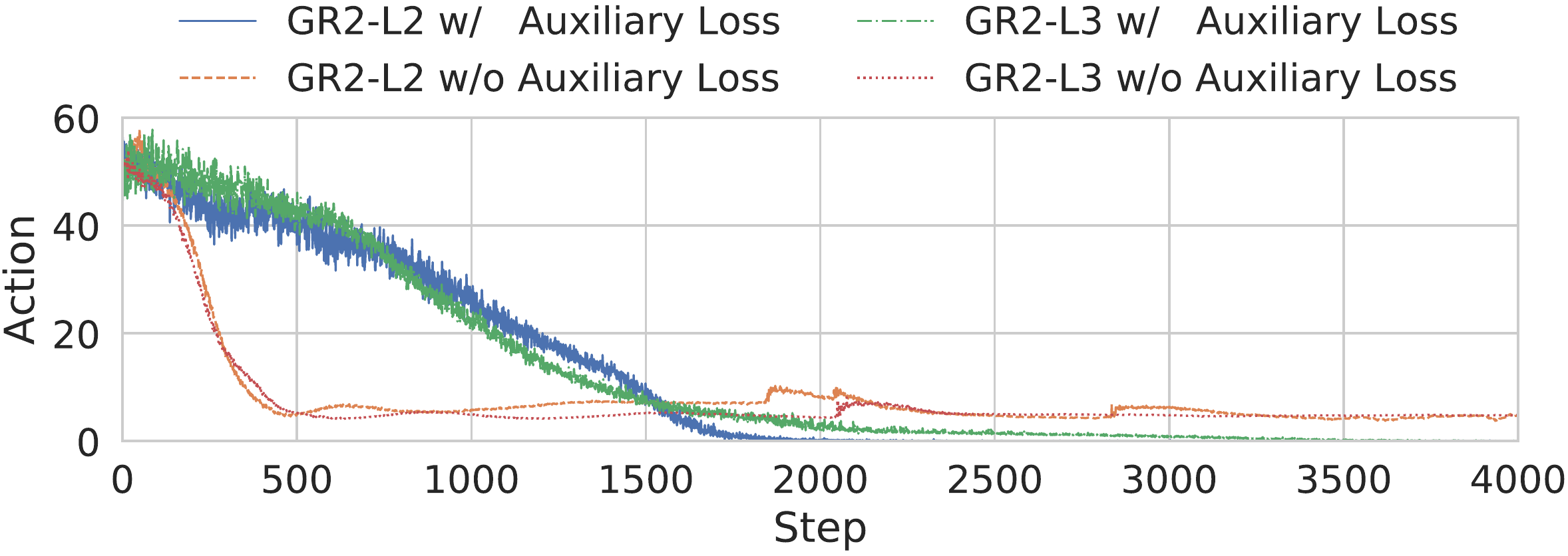}
         \caption{}
         \label{fig:pbeauty_aux_07}
     \end{subfigure}
      \begin{subfigure}[r]{.46\textwidth}
         \centering
         \includegraphics[width=\textwidth]{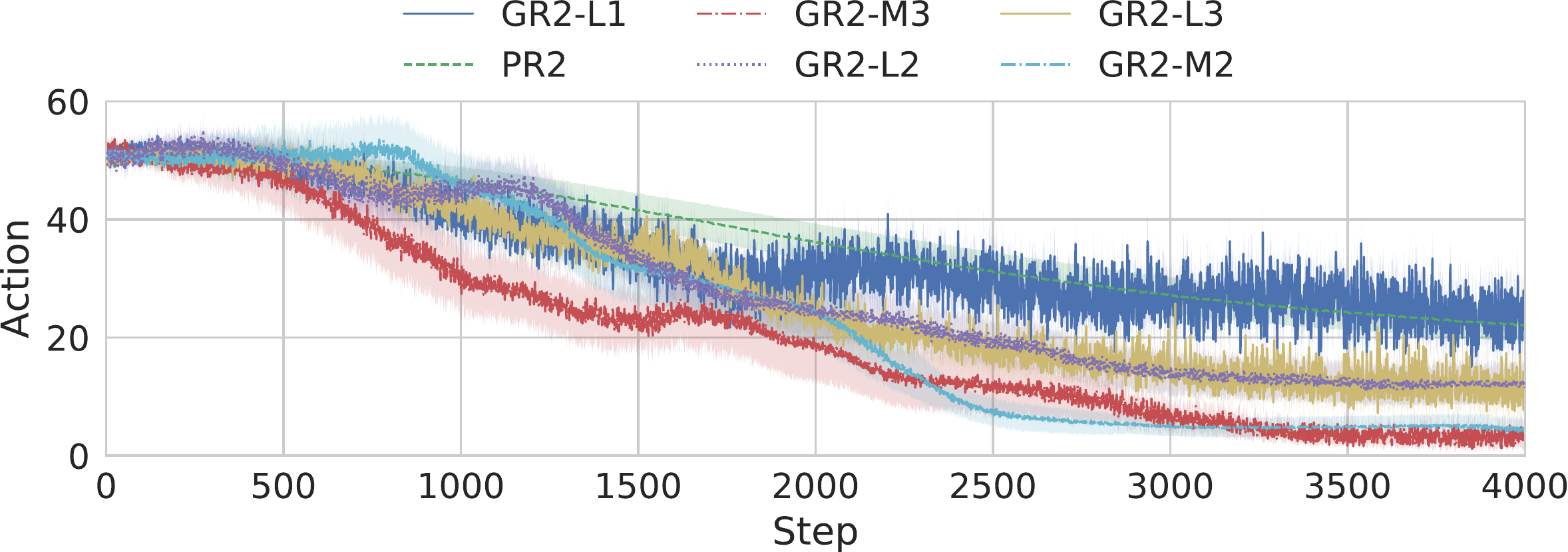}
         \caption{}
         \label{fig:pbeauty_mix}
     \end{subfigure}
      \vskip -5pt
     \caption{Beauty Contest of $p=0.7$, $n=2$. (a) Learning curves w/ or w/o the auxiliary loss of  Eq.~\ref{eq:j_level_constraint}. (b) Average learning curves of each GR2 method against the other six baselines (round-robin style).}
     \label{fig:pbeauty_roub_curve}
     \vskip -15pt
\end{figure*}
\begin{figure}
     \centering
         \includegraphics[width=.35\textwidth]{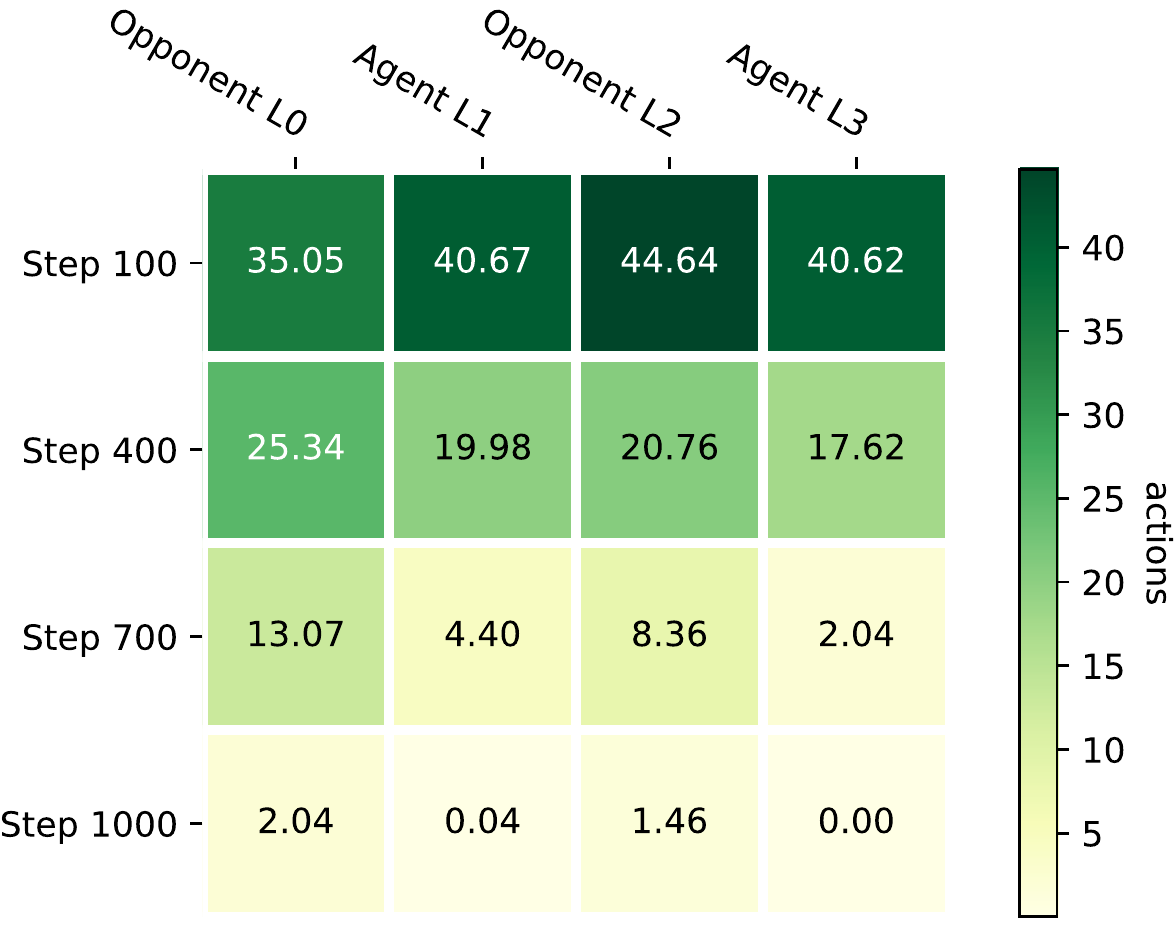}
     \vskip -5pt
     \caption{The guessing number of both agents during the training of the  GR2-L$3$ model in the Beauty Contest setting $(n=2, p=0.7)$.}
     \label{fig:k_reasoning}
     \vskip -10pt
\end{figure}

\vspace{-5pt}
\begin{proof}[Proof (of sketch)]
See \emph{Appendix \ref{lyapunov}} for the full proof.
In the two-player normal-form game,  we can treat the policy gradient update as a dynamical system. 
Through Lyapunov analysis, we first show why the convergence of $level$-$0$ method, i.e. \textbf{independent learning},  is not stable. 
 Then we show that the $level$-$k$ method's convergence is asymptotically stable as it accounts for opponents' steps before updating the policy. 
\end{proof}
\vspace{-5pt}

\begin{proposition}
In both the GR2-L \& GR2-M model, if the agents play pure strategies, once $level$-$k$ agent reaches a Nash Equilibrium, all higher-level agents will follow it too.  
\label{prop}
\end{proposition}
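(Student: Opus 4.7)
The proof strategy is induction on the reasoning level $m\geq k$. Fix a pure-strategy Nash profile $(a^{*,i})_{i\in[n]}$ of the stage game, and take as induction hypothesis that the GR2 policy at level $k$ already prescribes $a^{*,i}$ for every agent $i$. The inductive step in both variants reduces to: best-responding to a Nash action must return the same Nash action.

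For GR2-L the argument is almost direct from Eq.~\ref{recur}. Suppose at some level $m\geq k$ every agent $i$ plays $a^{*,i}$. Agent $i$'s level-$(m+1)$ policy is obtained by marginalising out $a^{-i}_{m}\sim \rho^{-i}_{m}(\cdot|s)$ and taking a best response; by hypothesis this marginal places unit mass on $a^{*,-i}$, so the level-$(m+1)$ best response is a best response to $a^{*,-i}$, which by the Nash property includes $a^{*,i}$. Hence $\pi^i_{m+1}$ prescribes $a^{*,i}$ as well, and the induction closes.

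For GR2-M I would carry the induction on the per-level best-response components $\hat{\pi}^i_m$ that appear inside the Poisson mixture in Eq.~\ref{eq:chk-poisson}. Assume $\hat{\pi}^{j}_m$ prescribes $a^{*,j}$ for every $j$ and every $m\leq k$. Then the opponent's level-$k$ mixture, itself a Poisson-weighted sum over $\hat{\pi}^{-i}_0,\dots,\hat{\pi}^{-i}_k$, collapses to the pure profile $a^{*,-i}$. Agent $i$'s level-$(k+1)$ component $\hat{\pi}^i_{k+1}$ best-responds to this pure opponent profile and thus again equals $a^{*,i}$ by the Nash property. Consequently the level-$(k+1)$ mixture $\pi^{i,\lambda}_{k+1}$ has all of its nonzero-weight components agreeing on $a^{*,i}$ and is itself the pure strategy $a^{*,i}$. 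Iterating gives the conclusion.

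The main obstacle is the base case for GR2-M: the hypothesis only says that the level-$k$ agent plays the Nash pure strategy, but the level-$k$ object is a nontrivial Poisson mixture over all lower components. I would resolve this by observing that a convex combination of distinct pure strategies cannot itself be pure, so the pure-strategy assumption forces every constituent $\hat{\pi}^i_m$, $m\leq k$, to coincide with $a^{*,i}$; the induction then propagates this coherence upward. A secondary subtlety is non-uniqueness of best responses at non-strict equilibria, where the recursion could in principle select a different best response; restricting attention to strict Nash equilibria, or relying on the continuity of the soft-max parameterisation in Eq.~\ref{policy} together with a consistent tie-breaking convention across levels, closes this gap.
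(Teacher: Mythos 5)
Your proof is correct and follows the same basic skeleton as the paper's: induction over levels driven by the observation that a best response to a pure Nash profile is again that Nash profile. For GR2-L the two arguments coincide. For GR2-M, however, you take a genuinely different route on the inductive step: you first argue that purity of the level-$k$ mixture forces \emph{every} constituent $\hat{\pi}^i_m$, $m\leq k$, to coincide with the Nash action, so the opponent mixture collapses to a point mass and the higher level best-responds to a pure profile. The paper instead never requires the lower-level components to agree: it notes that $\pi^{i}_{k,*}$ is by construction a best response to the mixture of levels $0$ through $k-1$, is a best response to level $k$ by the GR2-L argument, and then invokes linearity of the expected payoff in the Poisson weights to conclude it is a best response to the full mixture faced at level $k+1$. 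The paper's linearity argument is slightly more economical and avoids your "convex combination of distinct pure strategies cannot be pure" lemma; your collapse argument is more explicit about what the pure-strategy hypothesis actually entails at level $k$. You are also more careful than the paper on one real point: when the equilibrium is not strict, the best-response correspondence is multi-valued and the recursion could in principle select a different best response at a higher level; the paper's claim that the equilibrium strategy is "at least weakly dominant" does not close this, whereas your restriction to strict equilibria or a consistent tie-breaking convention does.
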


\vspace{-5pt}
\begin{proof}
See Appendix \ref{nash_prop} for the full proof.
\end{proof}
\vspace{-5pt}



\begin{corollary}
\label{col}
In the GR2 setting, higher-level strategies weakly dominate lower-level strategies.
\end{corollary}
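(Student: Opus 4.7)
The plan is to derive Corollary~\ref{col} from two ingredients already in hand: the best-response construction of GR2 strategies (Eq.~\ref{recur} and Eq.~\ref{eq:chk-poisson}) together with Proposition~\ref{prop}. By definition, the level-$k$ policy $\pi_k^i$ is the soft best response of agent $i$ to its belief about the opponent at level $k-1$; concretely, it maximises $\mathbb{E}_{a^{-i}\sim \rho_{k-1}^{-i}}\!\left[Q^i(s,a^i,a^{-i})\right]$ over all agent-$i$ policies. Specialising this maximisation to the candidate strategy $\pi_{k-1}^i$ delivers the pointwise inequality $V^i(\pi_k^i,\rho_{k-1}^{-i})\geq V^i(\pi_{k-1}^i,\rho_{k-1}^{-i})$, which is already a weak-dominance statement against the \emph{matched} level-$(k-1)$ thinker that the GR2 belief is designed to capture.

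To extend this to the full hierarchy, I would proceed by induction on $k$. The base case $k=1$ follows because $\pi_1^i$ best responds to the uniform $\pi_0^{-i}$, giving $V^i(\pi_1^i,\pi_0^{-i})\geq V^i(\pi_0^i,\pi_0^{-i})$. For the inductive step, the soft-policy-improvement property underlying the KL projection in Eq.~\ref{policy} guarantees that one GR2 update can only weakly increase the soft value; iterating this across levels and invoking linearity of expectation over the Poisson mixture in Eq.~\ref{eq:chk-poisson} yields $V^i(\pi_k^i,\pi_j^{-i})\geq V^i(\pi_{k-1}^i,\pi_j^{-i})$ for every $j\leq k-1$, which is the desired weak dominance against any opponent drawn from the hierarchy, and by convexity against any Poisson mixture thereof. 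Proposition~\ref{prop} then supplies the borderline case: once a Nash equilibrium is reached at some finite level, all higher levels copy the same policy, so the inequality degenerates to equality, consistent with \emph{weak} rather than strict dominance.

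The main obstacle will be pinning down the domain over which dominance is asserted. Weak dominance against an arbitrary opponent strategy cannot hold, because $\pi_k^i$ is tuned to a specific belief and can be outperformed by $\pi_{k-1}^i$ against an opponent tailored to defeat level-$k$ thinking; I would therefore state the corollary explicitly as dominance against opponent policies in the GR2 hierarchy at level at most $k-1$ (and their Poisson mixtures), which is the only regime in which the framework is meant to be deployed. A secondary technical subtlety is that GR2 policies are stochastic soft best responses rather than hard maximisers, so the inequality is understood in expected soft-value terms; the one-step soft-policy-improvement lemma that justifies Eq.~\ref{policy} is precisely what makes the inductive step go through, and I would cite it in lieu of a deterministic best-response argument.
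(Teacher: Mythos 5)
The paper never actually proves this corollary: it is stated bare, and its only operational content elsewhere in the paper is the matched-opponent inequality that the auxiliary loss in Eq.~\ref{eq:j_level_constraint} enforces, namely $Q^i(s,a^i_{\tilde k}, a^{-i}_{\tilde k-1}) \ge Q^i(s, a^i_{\tilde k-2}, a^{-i}_{\tilde k-1})$ --- the level-$\tilde k$ action does at least as well as the level-$(\tilde k-2)$ action against the \emph{same} level-$(\tilde k-1)$ opponent. Your first paragraph derives exactly this from the best-response definition, and your use of Proposition~\ref{prop} to account for the equality case is consistent with the paper's intent; up to that point your reconstruction is sound and is essentially the argument the paper leaves implicit (the appendix proof of Proposition~\ref{prop} contains the same one-line observation that the best response to a mixture of lower levels is ``at least weakly dominant'').

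The gap is in your inductive step. Soft policy improvement cannot deliver $V^i(\pi_k^i,\pi_j^{-i})\ge V^i(\pi_{k-1}^i,\pi_j^{-i})$ for \emph{every} $j\le k-1$: the improvement lemma is monotone with respect to a fixed Q-function, i.e.\ a fixed opponent model, but the opponent model changes from level to level ($\pi_k^i$ is tuned to $\rho_{k-1}^{-i}$ while $\pi_{k-1}^i$ is tuned to $\rho_{k-2}^{-i}$). Against a level-$(k-2)$ opponent, $\pi_{k-1}^i$ is by construction the best response, so $\pi_k^i$ can do strictly worse there; your own third-paragraph caveat about policies being ``tuned to a specific belief'' already contains this counterexample, yet the domain you finally settle on (all hierarchy levels at most $k-1$ and their Poisson mixtures) is still too large. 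The inequality that actually follows is only against the particular belief the level-$k$ policy is computed for: the level-$(k-1)$ opponent in GR2-L, or the one specific Poisson mixture over levels $0,\dots,k-1$ in GR2-M (not arbitrary mixtures, whose weights differ). Restricted to that domain, your first paragraph already constitutes the whole proof and the induction is unnecessary.
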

\vspace{-10pt}





\vspace{-5pt}
\section{Practical Implementations}

Computing the recursive reasoning is computational expensive. Here we first present the GR2 soft actor-critic algorithm with the pseudo-code in Algo.~\ref{alg:gr2ac_sketch}, and then introduce the compromises we make to afford the implementation.

\vspace{-5pt}
\begin{algorithm}[h]
\caption{GR2 Soft Actor-Critic Algorithm}
\label{alg:gr2ac_sketch}
\begin{algorithmic}[1]
\STATE Init: $\lambda, k$ and $\psi$ (learning rates). 
\STATE Init: $\theta^i, \phi^{-i}, \omega^i$  for each agent $i$. $ \bar{\omega}^{i} \leftarrow \omega^i$,
 $\mathcal{D}^i \leftarrow \emptyset$. 
   \FOR{each episode}
        \FOR{each step $t$}
                \STATE  Agents take a step according to $\pi^{i}_{\theta^i,k}(s)$  or $\pi^{i,\lambda}_{\theta^i,k}(s)$. 
            \STATE Add experience $(s, a^i, a^{-i}, r^i, s^{\prime})$ to $\mathcal{D}^i$.
            \FOR{each agent $i$}
                \STATE Sample a batch $\{(s^{\prime}_j, a^i_j, a^{-i}_j, r^i_j, s^{\prime}_j)\}_{j=0}^{M} \sim \mathcal{D}^i$. 
                \STATE Roll out policy to level k via GR2-L/M to get $a^{i\prime}_{j}$ and record inter-level results $(a^{i\prime}_{j,k},a^{-i\prime}_{j,k-1}, \cdots)$. 
                \STATE  Sample $a^{-i\prime}_{j} \sim \rho^{-i}_{\phi^{-i}}( \cdot | s^{\prime}_j, a^{i\prime}_j)$. 
\STATE $
\omega^{i} \leftarrow \omega^{i} - \psi_{ Q^{i} } \hat { \nabla } _ { \omega^{i} } J _ { Q^{i} } ( \omega^{i} ).
$  
\STATE $
\theta^{i} \leftarrow \theta^{i} - \psi_{ \pi^{i} } \hat { \nabla } _ { \theta^{i} }\left( J _ { \pi^{i}_k } ( \theta^{i} ) + J _ { \pi^{i}_{\tilde{k}} } ( \theta^{i} )\right).
$  
\STATE $
\phi^{-i} \leftarrow \phi^{-i} - \psi_{ \rho^{-i} } \hat { \nabla } _ { \phi^{-i}  } J _ { \rho^{-i} } ( \phi^{-i}  ).
$
            \ENDFOR
                \STATE 

$
\bar{\omega} ^ { i } \leftarrow \psi_{\bar{\omega} } \omega ^ { i } + ( 1 - \psi_{\bar{\omega}}  ) \bar{\omega} ^ { i  }.
$
        \ENDFOR
   \ENDFOR
\end{algorithmic}
\end{algorithm}
  \vspace{-10pt}



\paragraph{GR2 Soft Actor-Critic.}

For policy evaluation, 
each agent  rolls out the  reasoning policies recursively to level $k$ by either Eq.~\ref{recur} or  Eq.~\ref{eq:chk-poisson},  
 the parameter $\omega^{i}$ of the joint soft $Q$-function is then updated via minimizing the soft Bellman residual 
    $J_ { Q ^{i} } ( \omega ^ {i} )
    = \mathbb { E } _ {\mathcal { D }^{i} } [ \frac { 1 } { 2 } ( Q^{i} _ { \omega ^ {i} } ( s , a ^{i} ,  a ^{-i} )  - \hat{Q}^{i}( s , a ^{i} ,  a ^{-i} ) ) ^ { 2 } ]
    $
where $\mathcal{D}^i$ is the replay buffer storing trajectories, and the target $\hat{Q}^{i}$ goes by $\hat{Q}^{i}( s , a ^{i} ,  a ^{-i} ) = r^{i}( s , a ^{i} ,  a ^{-i} ) +  \gamma  \mathbb { E }_{s^{\prime} \sim \mathcal{P}}[V^{i}(s^{\prime})]$. 
In computing $V^{i}(s^{\prime})$, since agent $i$ has no access to  the exact opponent policy $\pi^{-i}_{\theta^{-i}}$, we instead compute the soft $Q^{i}(s,a^{i})$ by marginalizing the joint $Q$-function via the estimated opponent model $\rho^{-i}_{\phi^{-i}}$ by 
$Q^{i}(s,a^{i}) = \log \int \rho^{-i}_{\phi^{-i}}(a^{-i}|s, a^{i})\exp \left( Q^{i} (s, a^{i}, a^{-i}) \right) \diff a^{-i}$; the value function of the $level$-$k$ policy $\pi^{i}_{k}(a^{i}|s)$ then comes as  
$V^{i} ( s  ) = \mathbb { E } _ { a ^{i}\sim \pi^{i}_{k} } \left[ Q^i ( s , a^{i} ) - \log \pi^{i}_{k} ( a^{i}  | s  ) \right].$ 
 Note that $\rho^{-i}_{\phi^{-i}}$ at the same time is also conducting recursive reasoning against agent $i$ in the format of Eq.~\ref{recur} or Eq.~\ref{eq:chk-poisson}.
From agent $i$'s perspective however,  the optimal opponent model $\rho^{-i}$ still follows Eq.~\ref{eq:rho_pr2} in the multi-agent soft learning setting. We can therefore update $\phi^{-i}$ by minimizing the KL,
    $
    J _ { \rho ^ {-i} } ( \phi ^ {i} ) = \mathcal{D} _ { \mathrm { KL } }\big[ \rho^{-i}_{\phi^{-i}}(a^{-i}| s, a^{i})\| \exp \left( Q ^ { i }_{\omega^i} ( s, a^{i}, a^{-i} ) - Q _{\omega^i}^ {i} (s, a^{i} )  \right) \big].
    $
We maintain two approximated $Q$-functions of $Q ^ { i }_{\omega^i} ( s, a^{i}, a^{-i} ) $ and $Q ^ {i}_{\omega^i} (s, a^{i} )  $ separately for  robust training, and the gradient of $\phi^{-i}$ is computed via SVGD \citep{liu2016stein}. 

Finally, the  policy parameter $\theta^{i}$ for agent $i$ can be learned by improving towards what the current Q-function $Q ^ {i}_{\omega^i} (s, a^{i} )  $ suggests, as shown in Eq.~\ref{policy}.
By applying the reparameterization trick $a^i=f_{\theta^i}(\epsilon; s)$ with $\epsilon \sim \mathcal{N}(\boldsymbol{0}, \boldsymbol{I})$, we have $
    J _ { \pi ^ {i}_k } ( \theta ^ {i} ) =  \mathbb { E } _ {\substack{ s  , a ^ { i }_k , \epsilon  }} [ \log \pi ^{i} _ { \theta^{i},k } \left( f_{\theta^i}(\epsilon; s) | s   \right) - Q^{i}_{\omega^i}\left(s,f_{\theta^i}(\epsilon; s)\right) ].
    $
Note that as the agent' final decision comes from the best response to all lower levels,  
we would expect the gradient of $\partial {J _ { \pi ^ {i}_k }}/{\partial \theta^i}$ propagated from all higher levels during training.

\paragraph{Approximated Best Response via Deterministic Policy.}
 As the reasoning process of GR2 methods involves iterated usages of $\pi ^{i}_k ( a ^ {i} | s,  a ^ {-i} )$ and $\rho ^{-i}_k ( a ^ {-i} | s,  a ^ {i} )$, should they be stochastic,  the cost of integrating possible actions from  lower-level agents would be unsustainable for large $k$. 
 Besides, the reasoning process is also affected by the  environment where stochastic policies could further amplify the variance. 
 Considering such computational challenges, we approximate by deterministic policies throughout the recursive rollouts, e.g., the mean of Gaussian policy.
 However, note that  the highest-level agent policy $\pi_k^i$ that interacts with the environment is still stochastic. 
 To mitigate the potential weakness of   deterministic policies, we enforce the inter-level policy improvement.
The intuition comes from the Corollary.~\ref{col} that higher-level policy should perform better than lower-level policies against the opponents. 
To maintain this property, 
we introduce an auxiliary loss $J_ { \pi^{i}_{\tilde{k}}} ( \theta ^ {i} )$ in training   $\pi^{i}_{\theta^i}$ (see Fig.~\ref{fig:kdq} in \emph{Appendix \ref{sec:impl}}),  $ \mbox{with}~ s \sim \mathcal { D }^{i}, a ^{i}_{\tilde{k}} \sim  \pi^i_{\theta ^ {i}}, a ^{-i}_{\tilde{k}}  \sim \rho^{-i}_{\phi^{-i}} ~\mbox{and}~ \tilde{k}, \geq 2.$, we have 
{\footnotesize
\begin{align}
\label{eq:j_level_constraint}
&J_ { \pi^{i}_{\tilde{k}}} ( \theta ^ {i} ) =  \mathbb { E } _ {s,  a ^{i}_{\tilde{k}} ,  a ^{-i}_{\tilde{k}} } \big[ Q^{i}(s, a^{i}_{\tilde{k}},  a^{-i}_{\tilde{k} - 1}) - Q^{i}(s, a^{i}_{\tilde{k}-2},  a^{-i}_{\tilde{k} - 1}) \big]. 
\end{align}
}%
As we later show in  Fig.~\ref{fig:pbeauty_aux_07}, such auxiliary loss plays a critical role in improving the performance.

\paragraph{Parameter Sharing across Levels.}
We further assume parameter sharing for each agent during the recursive rollouts, i.e., $\theta^k = \theta^{k+2}$ for all  $\pi_{\theta^k}^i$ and $\rho_{\theta^k}^{-i}$. 
However, note that the policies that agents take at different levels are still \textbf{different} because the inputs in computing high-level policies depend on integrating  different outputs from low-level policies as shown in Eq.~\ref{recur}. 
In addition, we have the constraint in Eq.~\ref{eq:j_level_constraint} that enforces the inter-policy improvement. 
Finally, in the GR2-M setting, we also introduce different mixing weights for each lower-level policy  in the hierarchy (see Eq.~\ref{eq:chk-poisson}).

 \vspace{-5pt}
\section{Experiments}
\label{exps}

We start the experiments by elaborating  how the GR2 model works on  Keynes Beauty Contest, and then 
move onto the normal-form games that have non-trivial equilibria where common MARL methods fail to converge. 
Finally, we test on the navigation task that requires effective opponent modeling. 

We compare the  GR2 methods with six types of  baselines including Independent Learner via DDPG \citep{lillicrap2015continuous}, PR2 \citep{wen2018probabilistic}, multi-agent soft-Q (MASQL)  \citep{wei2018multiagent}, and MADDPG \citep{lowe2017multi}.
We also include the opponent modeling \citep{he2016opponent}   by augmenting DDPG with an opponent module (DDPG-OM) that predicts the opponent behaviors in future states, and a theory-of-mind model  \cite{rabinowitz2018machine} that 
captures the dependency of agent's policy on opponents' mental states (DDPG-ToM).  
We denote $k$ as the \emph{highest} level of reasoning in  GR2-L/M, and adopt $k=\{1,2,3\}, \lambda=1.5$.  All results are reported with $6$ random seeds. We leave the  detailed hyper-parameter settings and    ablation studies in \emph{Appendix \ref{exp_detail}} due to space limit.

\begin{figure*}[t!]
 \vskip -30pt
     \centering
     \begin{subfigure}[b]{.28\textwidth}
         \centering
         \includegraphics[width=\textwidth]{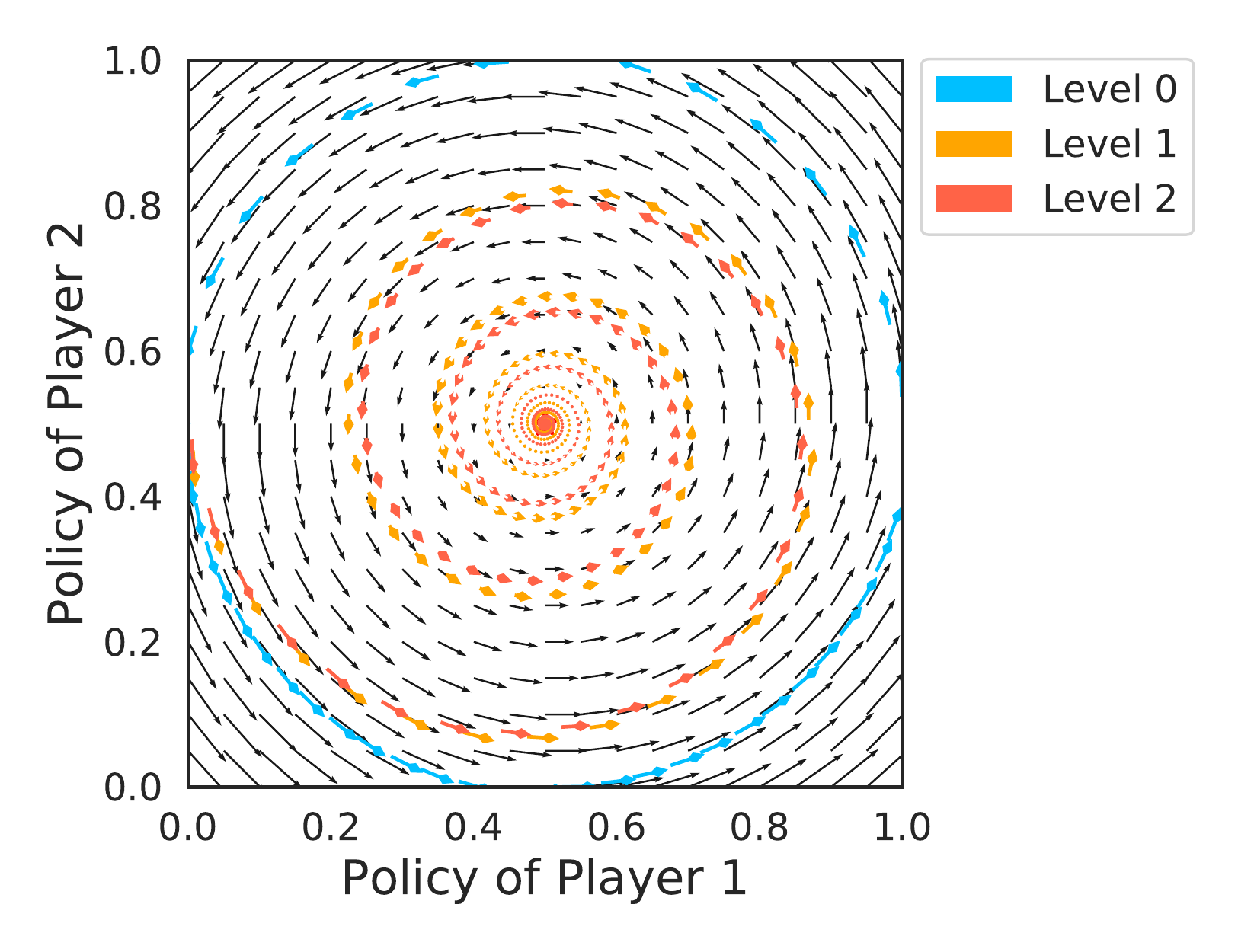}
         \caption{}
         \label{fig:l123_rotational}
     \end{subfigure}
     \begin{subfigure}[b]{.37\textwidth}
         \centering
         \includegraphics[width=\textwidth]{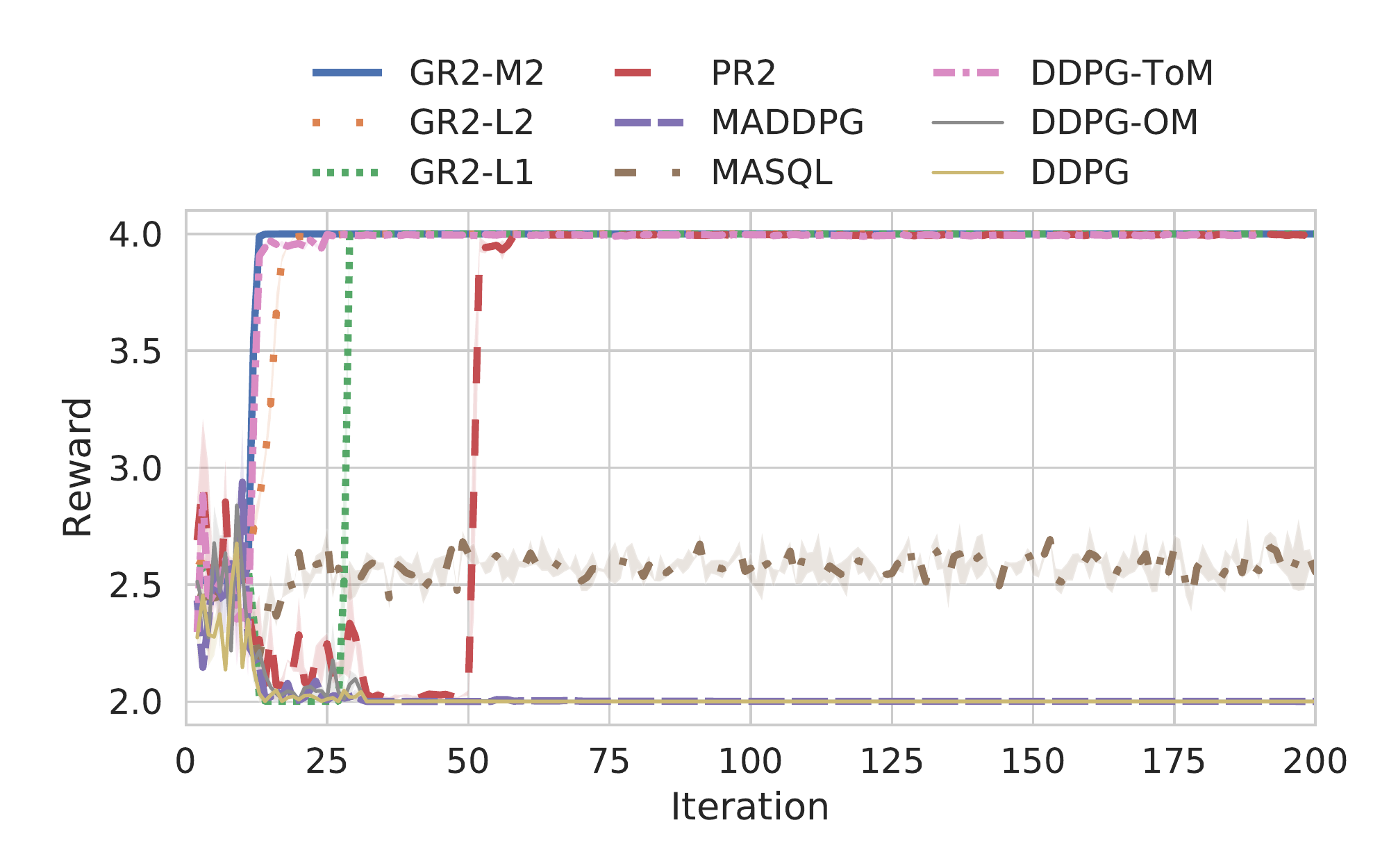}
         \caption{}
         \label{fig:ish}
     \end{subfigure}
     \begin{subfigure}[b]{.3\textwidth}
         \centering
         \includegraphics[width=\textwidth]{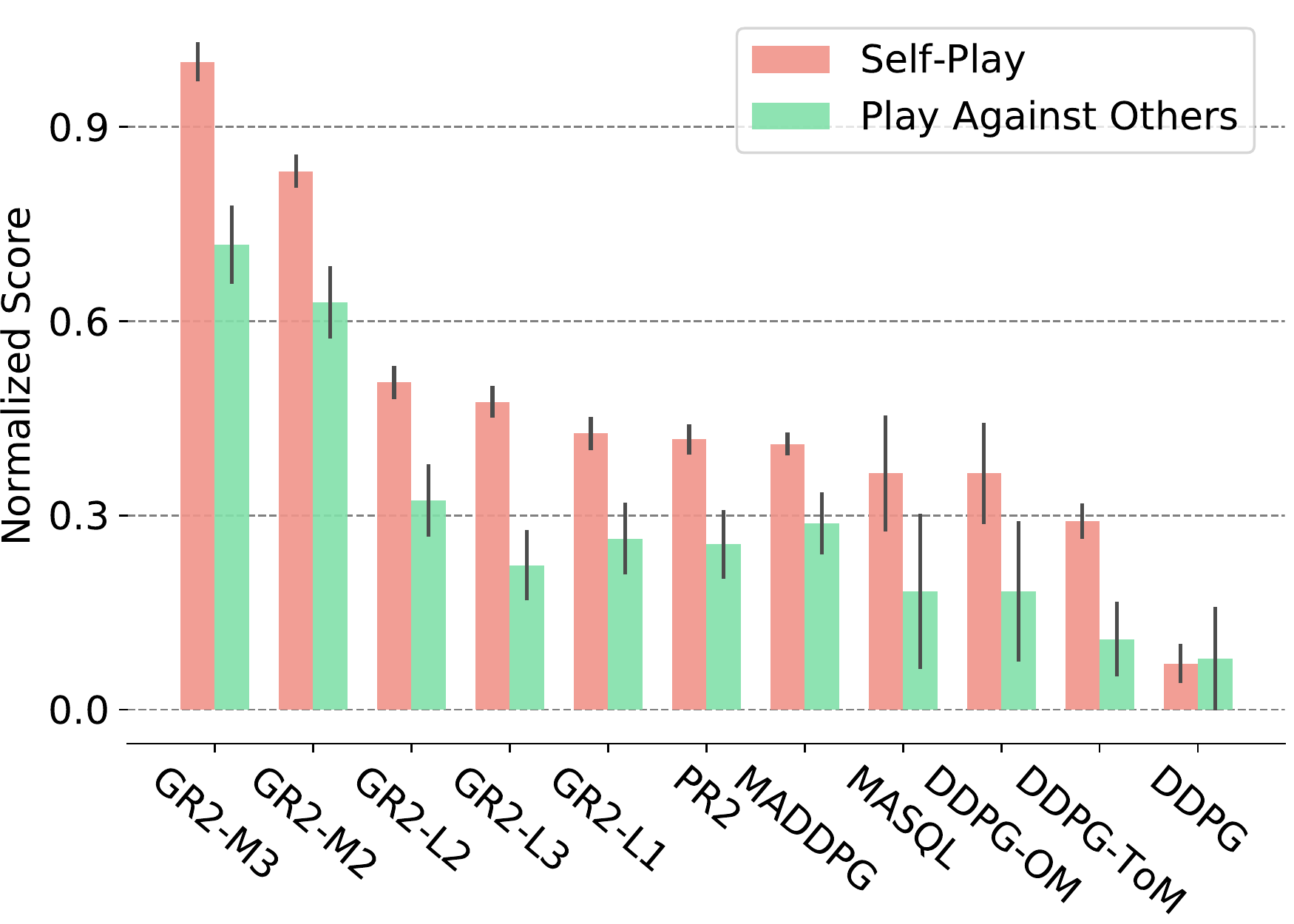}
         \caption{}
         \label{fig:particle_coop}
     \end{subfigure}
               \vskip -10pt
     \caption{(a) Learning dynamics of GR2-L on Rotational Game. (b) Average reward on Stag Hunt. (c) Performance on Coop. Navigation.}
     \label{fig:matrix}
     \vspace{-12pt}
\end{figure*}

\paragraph{Keynes Beauty Contest.}
In Keynes Beauty Contest $(n, p)$, all $n$ agents pick a number between $0$ and $100$, the winner is the agent whose guess is closest to $p$ times of the average number. 
The reward is set as the absolute difference.

In reality, higher-level thinking helps humans to get close to the Nash equilibrium of Keynes Beauty Contest (see  \emph{Introduction}). 
To validate if higher $level$-$k$ model would make multi-agent learning more effective, 
we vary different $p$ and $n$ values and present the self-play results in Table.~\ref{table:beauty}. 
We can tell that the GR2-L algorithms can effectively approach the equilibrium while the other baselines struggle to reach. The only exception is   $99.0$ in the case of $(p=1.1, n=10)$, which we believe is because of the saturated gradient from the reward.

We argue that the synergy of agents' reaching the equilibria   in this game only happens when the learning algorithm is able to make agents acknowledge different levels of rationality.  
For example, we visualize the step-wise reasoning outcomes of GR2-L$3$  in Fig.~\ref{fig:k_reasoning}. 
During training, the agent shows ability to respond to his estimation of the opponent's action by guessing a smaller number, e.g., in step $400$, $19.98<25.34$ and $17.62<20.76$. 
Even though the opponent estimation is not be accurate yet ($20.76 \neq 19.98 \times 1.1$), the agent still realizes that, with the recursive level increases, the opponent's guessing number will become smaller, in this case, $20.76 < 25.34$. Following this logic, both agents finally reach to $0$.
In addition, we  find that in $(p=0.7, n=2)$, GR2-L$1$ is  soon followed by the other higher-level GR2 models once it reaches the equilibria; this is in line with the Proposition~\ref{prop}.

To evaluate the robustness of GR2 methods outside the self-play context, we make each GR2 agent play against  all the other six baselines by a  round-robin style  and present the averaged performance in Fig.~\ref{fig:pbeauty_mix}. 
GR2-M models out-perform all the rest models by successfully guessing the right equilibrium, which is expected since the GR2-M is by design capable of  considering different types of  opponents. 

Finally, we justify the necessity of adopting the auxiliary loss of Eq.~\ref{eq:j_level_constraint} by Fig.~\ref{fig:pbeauty_aux_07}.
As we simplify the reasoning rollouts by using deterministic policies, we believe adding the auxiliary loss in the objective can effectively mitigate the potential weakness of policy expressiveness and guide the joint $Q$-function to a better direction to  improve the policy $\pi_k^i$.

\paragraph{Normal-form Games.}
We further evaluate the GR2 methods on two normal-form games: 
Rotational Game (RG) 
and  Stag Hunt (SH). 
The reward matrix of RG is $R_{\text{RG}} = \left[ \begin{array} { l l } { 0 , 3} & { 3, 2 } \\ { 1, 0 } & { 2, 1 } \end{array} \right]$, with  the only equilibria at $(0.5, 0.5)$. 
In SH, the reward matrix is $R _{\text{SH}} = \left[ \begin{array} { l l } { 4,4 } & { 1,3 } \\ { 3,1 } & { 2,2 } \end{array} \right]$. SH has   
two  equilibria  (S, S) that is Pareto optimal and (P, P) that is deficient. 



In RG, we examine the effectiveness that $level$-$k$ policies can converge to the equilibrium but $level$-$0$ methods cannot. 
We plot the gradient dynamics of RG in  Fig.~\ref{fig:l123_rotational}. 
$level$-$0$ policy, represented by independent learners,  gets trapped into the looping dynamics that never converges, while 
 the GR2-L policies can converge to the center equilibrium, with higher-level policy allowing faster speed.
These empirical findings in fact  match the theoretical results on different  learning dynamics demonstrated in the \textbf{proof of   Theorem~\ref{main_t}}.


To further evaluate the superiority of $level$-$k$ models,  we present 
Fig. \ref{fig:ish} that compares the average reward on the SH game where two equilibria exist. 
GR2 models, together with PR2 and DDPG-ToM, can  reach the Pareto optima with the maximum reward $4$, whereas other models are either fully trapped in the deficient equilibrium or mix in the middle. 
SH is a coordination game with no dominant strategy; agents choose between self-interest (P, P) and social welfare (S, S). 
Without knowing the opponent's choice, GR2 has to first anchor the belief  that the opponent may choose the social welfare to maximize its reward, and then reinforce this belief by passing it to the higher-level reasonings so that finally  the trust between agents can be built. 
The $level$-$0$ methods cannot develop such synergy because they cannot discriminate the self-interest from the social welfare as  both equilibria  can saturate the  value function. 
On the convergence speed in Fig. \ref{fig:ish}, as expected, higher-level models are faster than lower-level methods, and GR2-M  models are faster than GR2-L models.





\paragraph{Cooperative Navigation.}

We  test the GR2 methods in more complexed Particle World environments \citep{lowe2017multi} for the high-dimensional control task of  
\textit{Cooperative Navigation} with $2$ agents and $2$ landmarks. 
Agents are collectively rewarded based on the proximity of any one of the agent to the closest landmark while penalized for collisions.
The comparisons are shown in Fig.~\ref{fig:particle_coop} where we report the averaged minimax-normalized score.
We compare both the self-play performance and the averaged performance of playing with the rest $10$  baselines one on one. 
We notice that  the GR2 methods achieve critical advantages over traditional baselines in both the scenarios of self-play and playing against others; this is inline with the previous findings that GR2 agents are good at managing different levels of  opponent rationality (in this case, each opponent may want to go to a different landmark) so that collisions are  avoided at maximum. 
 In addition, we can find that all the listed models show better self-play performance than that of playing with the others; intuitively, this is because the opponent modeling is more accurate during self-plays.  


 \vspace{-5pt}
\section{Conclusion}
We have proposed a new solution concept to MARL -- generalized recursive reasoning (GR2) -- that enables agents to recognize opponents' bounded rationality and their corresponding  sub-optimal  behaviors.  
GR2 establishes a reasoning hierarchy among agents, based on which   
we derive the practical GR2 soft actor-critic algorithm.
Importantly, we prove in theory the existence of Perfect Bayesian Equilibrium  under the GR2 setting as well as the convergence of the policy gradient  methods on the two-player normal-form games. 
Series of experimental results have justified the advantages of GR2 methods over strong MARL baselines on modeling different opponents.






\clearpage

\bibliographystyle{named}
\begingroup
\bibliography{references}
\endgroup

\appendix
\onecolumn


\setcounter{secnumdepth}{1}
\renewcommand\thesection{\Alph{section}}
\section*{Appendix}

\setcounter{proposition}{0}
\setcounter{theorem}{0}

\section{Maximum Entropy Multi-Agent Reinforcement Learning}
\label{app:eq2}
We give the overall optimal distribution $p(\tau^i) = p(a^i_{1:T}, a^{j}_{1:T}, s_{1:T})$  of agent $i$ at first: 
\begin{align}
    p(a^i_{1:T}, a^{j}_{1:T}, s_{1:T}) = [ p ( s _ { 1 } ) \prod _ { t = 1 } ^ { T } p ( s _ { t + 1 } | s _ { t } , a ^{i} _ { t }, a ^{-i} _ { t } ) ] \exp ( \sum _ { t = 1 } ^ { T } r ^ {i} ( s _ { t } , a _ { t }, a ^{-i} _ { t } )) .
\end{align}
 
Analogously, we factorize empirical trajectory distribution $q(\tau^i)$ as:
\begin{equation}
\begin{aligned}
    \hat{p}(\tau^i) & =p(s_1) \prod_{t}p(s_{t^{\prime}}|s_t, a_t) \pi^i(a^{i}_t|s_t)\rho^{-i}(a^{-i}_t|s_t, a^{i}_t),
\end{aligned}
\end{equation}
where $\rho^{-i}(a^{-i}_t|s_t, a^{i}_t)$ is agent $i$'s model about the opponent's conditional policy, and $\pi^{i}(a^{i}_t|s_t)$ marginal policy of agent $i$. With fixed dynamics assumption, we can minimize the KL-divergence as follow:

\begin{equation}
\begin{aligned}
- D _ { \mathrm { KL } } ( \hat{p} ( \tau^i ) \| p ( \tau^i ) ) =\; & \mathbb{E} _ { \tau^i \sim \hat{p} ( \tau^i ) } \Big[ \log p ( s _ { 1 } ) + \sum _ { t = 1 } ^ { T } \Big( \log p ( s _ { t + 1 } | s _ { t } , a _ { t }, a ^{-i} _ { t } ) + r ^ i ( s _ { t } , a ^{i} _ { t }, a ^{-i} _ { t } ) \Big)  \\
& \qquad\qquad - \log p ( s _ { 1 } ) - \sum _ { t = 1 } ^ { T } \Big( \log p ( s _ { t + 1 } | s _ { t } , a^{i} _ { t }, a^{-i} _ { t } ) + \log \big(\pi^i(a^{i}_t|s_t)\rho^{-i}(a^{-i}_t|s_t, a^{i}_t)\big)\Big)  \Big]\\
=\;& \mathbb{E} _ { \tau^i \sim \hat{p} ( \tau^i ) } \Big[ \sum _ { t = 1 } ^ { T } r ^ i ( s _ { t } , a ^{i}_ { t }, a ^{-i}_ { t } ) - \log \big(\pi^i(a^{i}_t|s_t)\rho^{-i}(a^{-1}_t|s_t, a^{i}_t)\big)   \Big]\\    
=\;&  \sum _ { t = 1 } ^ { T } \mathbb{E} _ { ( s _ { t } , a ^{i}_ { t }, a ^{-i}_ { t }  ) \sim \hat{p} ( s _ { t } , a ^{i}_ { t }, a ^{-i}_ { t }   ) } \Big[ r ^ i ( s _ { t } , a ^{i}_ { t }, a ^{-i}_ { t }  ) - \log \big(\pi^i(a^{i}_t|s_t)\rho^{-i}(a^{-1}_t|s_t, a^{i}_t)\big) \big]  \\
=\;&  \sum _ { t = 1 } ^ { T } \mathbb{E} _ { ( s _ { t } , a ^{i}_ { t }, a ^{-i}_ { t }  ) \sim \hat{p} ( s _ { t } , a ^{i}_ { t }, a ^{-i}_ { t }  ) } \Big[ r ^ i ( s _ { t } , a ^{i}_ { t }, a ^{-i}_ { t }  ) +  \mathcal { H } \big( \rho ^ {-i} ( a ^{-i}_ { t } | s _ { t }, a ^{i}_ { t } ) \big) + \mathcal { H } \big( \pi ^ i ( a ^{i}_ { t } | s _ { t } ) \big) \Big]  ,
\end{aligned}
\end{equation}
where $\mathcal { H }$ is entropy term, and the objective is to maximize reward and polices' entropy. 

In multi-agent cooperation case, the agents work on a shared reward, which implies $\rho^{-i}(a^{-i}_t|s_t, a^{i}_t)$ would help to maximize the shared reward. It does not mean that the agent can control the others, just a reasonable assumption that the others would coordinate on the same objective. 
As before, we can find the optimal $\rho^j(a^{j}_t|s_t, a^{i}_t)$ by recursively maximizing:
\begin{equation}
\mathbb{E}_ { ( s _ { t } , a ^{i}_ { t }, a ^{-i}_ { t }  ) \sim \hat{p} ( s _ { t } , a ^{i}_ { t }, a ^{-i}_ { t }   ) }\Big[-D_{\mathrm{KL}}\Big(\rho^{-i}_t(a^{-i}_t|s_t,a^{i}_t) \Big\| \frac{\exp\big(Q^i(s_t,a^i_t,a^{-i}_t)\big)}{\exp\big(Q^i(s_t,a^{i}_t)\big)}\Big)  + Q^i(s_t,a^{i}_t)\Big],   
\end{equation}
where we define:
\begin{equation}
Q^{i} (s, a^i)  = \log \sum_{a^{-i}} \exp(Q^{i} (s, a^i, a^{-i})    ) ,
\end{equation}
which corresponds to a bellman backup with a soft maximization.  
And optimal opponent conditional policy is given as:
\begin{equation}
\rho^ {-i}(a^{-i}| s, a^{i}) \propto \exp ( Q ^ { i } (s, a^{i}, a^{-i} ) - Q ^ { i} (s, a^{i} ) ).
\end{equation}

\section{Algorithm Implementations}
\label{sec:impl}

\begin{figure*}[h!]
  \centering
  \epsfig{file=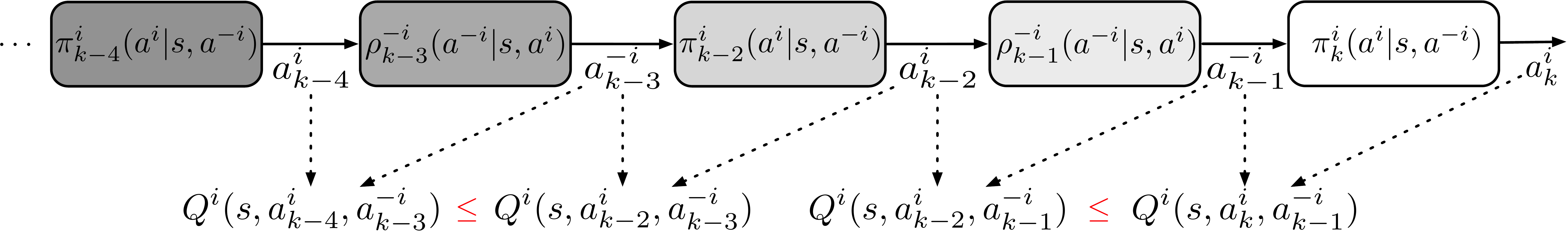, width=.75\linewidth}
  \caption{Inter-level policy improvement maintained by Eq.~\ref{eq:j_level_constraint} so that higher-level policy weakly dominates lower-level policies.
  }
\label{fig:kdq}
\end{figure*}



\section{Proof of Theorem 1}
\label{peb}

\begin{theorem}
GR2 strategies extend a norm-form game into extensive-form game, and there exists a Perfect Bayesian Equilibrium in that game.  
\end{theorem}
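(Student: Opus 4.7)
The plan is to first make precise the extensive-form embedding of the normal-form stage game that is induced by the $level$-$k$ hierarchy, and then verify the two defining conditions of a Perfect Bayesian Equilibrium (sequential rationality and belief consistency) so that existence follows from the standard Kreps--Wilson style argument for finite extensive-form games with perfect recall.

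First I would construct the extensive-form game $\Gamma^{GR2}_k$ explicitly. For each agent $i$ and each reasoning depth $\ell \in \{0,1,\dots,k\}$ I would introduce a decision node at which agent $i$ chooses an action $a^i_\ell$. The tree is ordered by reasoning depth: the $\ell$-th level of agent $i$ moves after observing the hypothetical moves of the opponents at level $\ell-1$, which in turn depend on agent $i$'s fictitious move at level $\ell-2$, exactly as in Eq.~\ref{recur}. Information sets are defined so that at level $\ell$ an agent knows only the state $s$ and the lower-level hypothetical actions generated along the branch reaching that node; this gives perfect recall because no information is ever forgotten as the recursion unfolds. For GR2-M one enlarges the game with an initial Nature move that draws the opponent's level from the Poisson($\lambda$) distribution of Eq.~\ref{eq:chk-poisson}, turning the game into one of incomplete information in the standard Harsanyi sense.

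Next I would specify the belief system $\mu$ at each information set. For GR2-L the belief puts unit mass on the opponent being of level $\ell-1$ at agent $i$'s level-$\ell$ information set; for GR2-M it is precisely the truncated Poisson mixture weights $e^{-\lambda}\lambda^n/(n!\,Z)$ from Eq.~\ref{eq:chk-poisson}. The \emph{consistency} requirement is then automatic on-path: these beliefs are exactly the ones used to build the strategies $\pi^{i}_k$ and $\pi^{i,\lambda}_k$, so they coincide with the distributions obtained by Bayes' rule applied to the (completely mixed approximations of the) equilibrium profile. Off-path beliefs can be chosen as the limit of the same Poisson/Dirac mixtures under a trembling-hand perturbation, which delivers a consistent assessment in the sense of Kreps--Wilson.

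For \emph{sequential rationality} I would argue level by level: conditional on the belief at a level-$\ell$ information set, Eqs.~\ref{recur}--\ref{eq:chk-poisson} together with Eq.~\ref{policy} define the agent's move as the soft best response to the marginalized opponent distribution, and this is by construction a best response among all continuation strategies from that node (since higher levels of one agent only enter another agent's computation as a belief, not as a payoff-relevant continuation for the agent moving here). Because the action space and reasoning depth are bounded, the resulting best-response correspondence is non-empty, convex-valued and upper hemicontinuous, so Kakutani's fixed-point theorem yields a strategy profile that is mutually best-responding across all information sets; pairing it with the consistent belief system above gives a PBE.

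The main obstacle I expect is the sequential rationality step: because each agent's recursion simulates the opponent's hypothetical play, one must be careful that an agent's deviation at a high-level information set does not create an incentive to deviate at the lower-level fictitious nodes that it uses to construct its own belief. I plan to resolve this by exploiting the perfect-recall structure and the fact that the lower-level hypothetical moves are \emph{simulated}, not \emph{played}, so they enter only as belief-formation and do not affect realized payoffs; this decouples the optimality check across levels and makes the backwards-induction / fixed-point argument go through. The existence of a fully mixed $\varepsilon$-perturbation (needed to justify off-path consistency) follows from the stochastic softmax form of the policy in Eq.~\ref{policy}, letting $\varepsilon \to 0$ to obtain the PBE in the limit.
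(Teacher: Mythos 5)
Your proposal is correct and follows essentially the same route as the paper: extend the $level$-$k$ hierarchy to an extensive-form game with perfect recall, define sequential rationality and Bayes-consistency of the level-beliefs $\lambda_{\tilde{k}}$, and obtain existence via a Kreps--Wilson style completely-mixed perturbation and limit. The only difference is one of emphasis --- you make the Kakutani fixed-point step explicit where the paper merely asserts existence of the perturbed assessments $(\pi_m, p_m)$, while the paper spells out the continuity/contradiction argument showing the limit assessment remains sequentially rational, which you compress into ``letting $\varepsilon \to 0$.''
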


\begin{proof}
Consider an extensive game, which is extended from a normal form game by $level$-$k$ strategies, with perfect information and recall played by two players $(i,-i)$: $(\pi^i, \pi^{-i}, u^{i}, u^{-i}, \Lambda)$, where $\pi^{i/-i}$ and $u^{i/-i}$ are strategy pairs and payoff functions for player $i,-i$ correspondingly. $\Lambda$ denotes the  lower-level  reasoning trajectory/path so far.
An intermediate reasoning action/node in the $level$-$k$ reasoning procedure is denoted by $h_t$. The set of the intermediate reasoning actions at which player $i$ chooses to move is denoted $H^i$ (a.k.a information set). 
Let $\pi^{i/-i}_{\tilde{k}}$ denote the strategies of a $level$-$\tilde{k}$ player and $\tilde{k} \in \{0,1,2 \cdots k\}$.
A $level$-$k$ player holds a prior belief that the opponent is a $level$-$\tilde{k}$ player with probability $\lambda_{\tilde{k}}$, where $\lambda_{\tilde{k}} \in [0,1]$ and $\sum_{\tilde{k}=0}^k \lambda_{\tilde{k}} =1$.
We denote the belief that the opponent is a $level$-$\tilde{k}$ player as $p^i_{\tilde{k}}(h^t)$. 
In equilibrium, a $level$-$k$ player chooses an optimal strategy according to her belief at every decision node, which implies choice is sequentially rational as following  defined:

\begin{definition}
\label{def1}
    (Sequential Rationality). A strategy pair $\{\pi^{i}_{*},\pi^{-i}_{*}\}$ is sequentially rational with respect to the belief pair $\{p^{i},p^{-i}\}$ if for both ${i,-i}$, all strategy pairs $\{\pi^{i},\pi^{-i}\}$ and all nodes $h_t^i \in H^i$:
$$
\begin{aligned}
    \sum_{\tilde{k}=0}^k \lambda_{\tilde{k}} p^i_{\tilde{k}}(h^i_t)u^i(\pi^{i}_{*}, \pi^{-i}_{*}|h_t^i)  \geq \sum_{\tilde{k}=0}^k \lambda_{\tilde{k}} p^i_{\tilde{k}}(h^i_t)u^i(\pi^{i}, \pi^{-i}_{*}|h_t^i),
\end{aligned}
$$
\end{definition}

Based on Definition~\ref{def1}, we have the strategy $\pi^i$ is \textbf{sequentially rational} given $p^i$. It means strategy of player $i$ is optimal in the part of the game that follows given the strategy profile and her belief about the history in the information set that has occurred.

In addition, we also require the beliefs of an $level$-$k$ player are consistent. Let $p^i(h_t|\pi^{i},\pi^{-i})$ denote the probability that reasoning action $h_t$ is reached according to the strategy pair, $\{\pi^{i},\pi^{-i}\}$. Then we have the consistency definition:
\begin{definition}
\label{def2}
(Consistency). The belief pair $\{\rho^{-i}_{*},\rho^{i}_{*}\}$ is consistent with the subjective prior $\lambda_{\tilde{k}}$, and the strategy pair $\{\pi^{i},\pi^{-i}\}$ if and only if for $i,-i$ and all nodes $h_t^i \in H^i$:
$$
p_{\tilde{k},*}^{i}\left(h^{i}_{t}\right)=
\frac{\lambda_{\tilde{k}} p_{\tilde{k}}^i\left(h^{i}_{t} | \pi^{i}, \pi^{-i}\right)}
{\sum_{\hat{k}=0}^k \lambda_{\hat{k}} p_{\hat{k}}^i(h^i_t|\pi^{i}, \pi^{-i})},
$$
where there is at least one $\hat{k} \in \{0,1,2\cdots,k\}$ and $p_{\hat{k}}^i(h^i_t|\pi^{i}, \pi^{-i}) > 0$.
\end{definition}

The belief $p^i$ is \textbf{consistent} given $\pi^{i},\pi^{-i}$ indicates that for every intermediate reasoning actions reached with positive probability given the strategy profile $\pi^{i},\pi^{-i}$ , the probability assigned to each history in the reasoning path by the belief system $p^i$ is given by Bayes' rule.
In summary, sequential rationality implies each player's strategy optimal at the beginning of the game given others' strategies and beliefs~\cite{levin2019bridging}. Consistency ensures correctness of the beliefs. 

Although the game itself has perfect information, the belief structure in our $level$-$k$ thinking makes our solution concept an analogy of a Perfect Bayesian Equilibrium.
Based on above two definitions, we have the existence of Perfect Bayesian Equilibrium in $level$-$k$ thinking game.
\begin{noproposition}
For any $\lambda_{\tilde{k}}$, where $\lambda_{\tilde{k}} \in [0,1]$ and $\sum_{\tilde{k}=0}^k \lambda_{\tilde{k}} =1$,
    there is a Perfect Bayesian Equilibrium exists.
\end{noproposition}

Now, consider an extensive game of incomplete information,  $(\pi^i, \pi^{-i}, u^{i}, u^{-i}, p^{i}, p^{-i}, \lambda_k, \Lambda)$, where $\lambda_k$ denotes the possible levels/types for agents, which can be arbitrary $level$-$k$ player.
Then, according to~\citet{kreps1982sequential}, for every finite extensive form game, there exists at least one sequential equilibrium should satisfy Definition.~\ref{def1} and~\ref{def2} for sequential rationality and consistency, and the details proof as following: 

We use $E^i(\pi, p, \lambda_k,  h^i) = \sum_{\tilde{k}=0}^k \lambda_{\tilde{k}} p^i_{\tilde{k}}(h^i_t)u^i(\pi^{i}, \pi^{-i}|h_t^i)$ as expected payoff for player $i$, for every player $i$ and each reasoning path $h_t^i$.  Choose a large integer $m (m>0)$ and consider the sequence of strategy pairs and consistent belief pairs $\{\pi_m, p_m\}_m$, there exists a $(\pi_m, p_m)$:

$$
E^{i}\left(\pi_{m}, p_{m},  \lambda_k, h^i_{t^i}\right) \geq E^{i}\left((\pi_{m}^{-i},\pi^i), p_n(\pi_{m}^{-i},\pi^i),\lambda_k, h^i_{t^i}\right),
$$

for any strategy $\pi^i$ with induced probability distributions in  $\Pi_{t^i=1}^{T}=\Delta^{\frac{1}{m}}(p(h^i_{t^i}))$.

Then, consider the strategy and belief pair $\hat{\pi}, \hat{p}$ given by:

$$
(\hat{\pi}, \hat{p})=\lim _{m \rightarrow \infty}\left(\pi_{m}, p_{m}\right).
$$

Such a limit exists because $\Pi_{j=1}^{m} \Pi_{t_{j}=1}^{T_{j}} \Delta^{\frac{1}{m}}\left(p\left(h^{j}_{t_{j}}\right)\right)$ forms a compact subset  of a Euclidean space, and every sequence $\{\pi_m, p_m\}_m$ has a limit point.
We claim that for each player $i$ and each reasoning path $h^i_{t^i}$:

\begin{align}
E^{i}\left(\hat{\pi}_{m}, \hat{p}_{m},\lambda_k, h^i_{t^i}\right) \geq E^{i}\left((\hat{\pi}_{m}^{-i},\pi^i), p(\hat{\pi}_{m}^{-i},\pi^i), \lambda_k,h^i_{t^i}\right),
\label{ifnot}
\end{align}

for any strategy $\pi^i$ of player $i$. 

\textbf{If not}, then for some player $i$ and some strategy $\pi^i$ of player $i$, we have:

$$
E^{i}\left(\hat{\pi}_{m}, \hat{p}_{m},\lambda_k, h^i_{t^i}\right) < E^{i}\left((\hat{\pi}_{m}^{-i},\lambda_k,\pi^i), p(\hat{\pi}_{m}^{-i},\pi^i),\lambda_k, \lambda_k,h^i_{t^i}\right).
$$

Then, we let

$$
E^{i}\left((\hat{\pi}_{m}^{-i},\pi^i), p(\hat{\pi}_{m}^{-i},\pi^i),\lambda_k, h^i_{t^i}\right) - E^{i}\left(\hat{\pi}_{m}, \hat{p}_{m},\lambda_k, h^i_{t^i}\right) = b >0 .
$$

Now as the expected payoffs are continuous in the probability distributions at the reasoning paths and the beliefs, it follows that there is an $m_0$ sufficiently large such that for all $m\geq m_0$, 

$$
|E^{i}\left(\pi_{m}, p_{m},\lambda_k, h^i_{t^i}\right) - E^{i}\left(\hat{\pi}_{m}, \hat{p}_{m}, \lambda_k,h^i_{t^i}\right)| \leq \frac{b}{4},
$$
and
$$
E^{i}\left((\hat{\pi}_{m}^{-i},\pi^i), p_n(\hat{\pi}_{m}^{-i},\pi^i),\lambda_k, h^i_{t^i}\right) - E^{i}\left(\hat{\pi}_{m}, \hat{p}_{m}, \lambda_k,h^i_{t^i}\right) \leq \frac{b}{4}.
$$

From above equations and for all $m\geq m_0$, we have
$$
\begin{aligned}
E^{i}\left((\pi_{m}^{-i},\pi^i), p(\pi_{m}^{-i},\pi^i),\lambda_k, h^i_{t^i}\right) &\geq E^{i}\left((\hat{\pi}_{m}^{-i},\pi^i), p(\hat{\pi}_{m}^{-i},\pi^i),\lambda_k, h^i_{t^i}\right) -  \frac{b}{4}\\
&=E^{i}(\hat{\pi},\hat{p},\lambda_k,h^i_{t^i}) + \frac{3b}{4}\\
&\geq E^{i}\left((\pi_{m}^{-i},\pi^i), p(\pi_{m}^{-i},\pi^i), \lambda_k,h^i_{t^i}\right)+ \frac{b}{2}.
\end{aligned}
$$

for a given sequential game, there is a $T>0$ such that
$$
\bigg|E^{i}\left[(\pi_{\xi}^{-i},\pi^i), p_n(\pi_{\xi}^{-i},\pi^i), \lambda_k,h^i_{t^i}\right] - E^{i}\left(\hat{\pi}_{\xi}, \hat{p}_{\xi},\lambda_k, h^i_{t^i}\right) \bigg| < \frac{T}{\xi},
$$

where $\pi^{i}=\lim_{\xi \rightarrow \infty} \pi^{i}_{\xi}$ of a sequence $\{\pi^i_{\xi}\}_{\xi}$ of $\frac{1}{\xi}$ bounded  strategies of player $i$.
For the sequence $\{\pi_m,p_m\}$ we now choose an $m_1$  sufficiently large such that $\frac{T}{m}<\frac{b}{4}$. Therefore, for any strategy $\pi^i$ of player $i$, we have

$$
\begin{aligned}
E^{i}\left((\pi_{m}^{-i},\pi^i), p_n(\pi_{m}^{-i},\pi^i), \lambda_k,h^i_{t^i}\right) 
&\geq E^{i}\left((\pi_{m}^{-i},\pi^i), p(\pi_{m}^{-i},\pi^i), \lambda_k,h^i_{t^i}\right) -  \frac{T}{m}\\
&=E^{i}(\pi_m,p_m,\lambda_k,h^i_{t^i}) + \frac{b}{4}.
\end{aligned}
$$

But this result contradicts the previous claim in Eq.~\ref{ifnot}, which indicates the claim must hold.
In other words, Perfect Bayesian Equilibrium must exist.

\end{proof}

\begin{remark}
When $\lambda_k=1$, it is the special case where the policy is  $level$-$k$ strategy, and it coincides with Perfect Bayesian Equilibrium.
\end{remark}

\section{Proof of Theorem 2}
\label{lyapunov}

\begin{theorem}
In two-player two-action games, if these exist a mixed strategy equilibrium, under mild conditions, the learning dynamics of GR2 methods to the equilibrium is asymptotic stable in the sense of Lyapunov.  

\end{theorem}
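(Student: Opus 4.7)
The plan is to cast the GR2 learning updates as a continuous-time dynamical system in policy space and perform a linearization plus Lyapunov argument around the mixed equilibrium. For two-player two-action games each agent's mixed policy is parametrized by a single scalar $p^i \in [0,1]$, so the state of the learning process lives in a two-dimensional box and the expected reward $V^i(p^i, p^{-i})$ is bilinear in $(p^i, p^{-i})$. I would first write the naive (level-$0$, independent) policy-gradient update as the autonomous system $\dot p^i = \partial V^i / \partial p^i$ and show, by direct computation of the Jacobian at the interior mixed equilibrium $(p^{i*}, p^{-i*})$, that it has the form $\bigl(\begin{smallmatrix}0 & a\\ b & 0\end{smallmatrix}\bigr)$ with $ab<0$ in the rotational / zero-sum-like case; its eigenvalues are purely imaginary, so trajectories orbit the equilibrium and the system is at best Lyapunov-stable but not asymptotically stable. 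This recovers, and formalizes, the cycling behavior observed in Figure~\ref{fig:l123_rotational}.

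Next I would derive the level-$k$ GR2 update. Because a level-$k$ agent best-responds to a level-$(k{-}1)$ opponent who in turn best-responds to a level-$(k{-}2)$ self-model, expanding Eq.~\ref{recur} for the deterministic-best-response approximation used in practice produces a gradient of the form $\dot p^i = \partial V^i/\partial p^i \;+\; \eta_k\, (\partial^2 V^i/\partial p^i \partial p^{-i})\, (\partial V^{-i}/\partial p^{-i})$, where $\eta_k>0$ aggregates the level coefficients (or, for GR2-M, the truncated Poisson weights of Eq.~\ref{eq:chk-poisson}). The extra term is exactly the first-order correction that anticipates the opponent's move; an analogous expression holds for $\dot p^{-i}$. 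Linearizing at $(p^{i*}, p^{-i*})$ now yields a Jacobian whose diagonal picks up strictly negative entries $-\eta_k\, a\, b > 0$-corrected terms, so its eigenvalues acquire negative real parts under the mild conditions that (i) the game has a strictly mixed interior equilibrium (so $a,b\neq 0$) and (ii) the step-size / level-weight $\eta_k$ is positive, i.e.\ $k\ge 1$.

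With the linearization in hand, I would close the argument by exhibiting an explicit Lyapunov function. A natural candidate is the quadratic form $L(p^i,p^{-i}) = \tfrac{1}{2}\alpha (p^i - p^{i*})^2 + \tfrac{1}{2}\beta (p^{-i} - p^{-i*})^2$ with $\alpha,\beta>0$ chosen to diagonalize the skew part of the Jacobian (by Lyapunov's matrix equation $A^\top P + P A = -Q$ for some $Q\succ 0$, which is solvable precisely because the linearized matrix is Hurwitz). Differentiating along the GR2 flow gives $\dot L = -(p-p^*)^\top Q (p-p^*) + O(\|p-p^*\|^3)$, which is negative definite in a neighborhood of the equilibrium, so $(p^{i*},p^{-i*})$ is asymptotically stable in the sense of Lyapunov. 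For GR2-M one only needs to observe that the Poisson-weighted mixture is a convex combination of level-$\tilde k$ dynamics for $\tilde k \le k$, so the correction term stays strictly positive and the same Lyapunov function works.

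The main obstacle I anticipate is handling the coupling between the two players' simultaneous recursive reasonings cleanly: each agent's level-$k$ best response depends on its own level-$(k{-}2)$ marginal, so the symbolic expansion of $\dot p^i$ becomes unwieldy for general $k$. I would handle this by an induction on $k$, using the pure-strategy consistency from Proposition~\ref{prop} (higher levels inherit equilibria of lower levels) to argue that the higher-order terms in the expansion vanish at the fixed point and therefore only contribute to the sign of $\eta_k$, not to the location of the equilibrium itself. The remaining subtlety is stating the "mild conditions" precisely, which I would formalize as: (a) the interior mixed equilibrium is isolated, (b) $\partial^2 V^i/\partial p^i \partial p^{-i}\neq 0$ at the equilibrium, and (c) the learning rates across levels are sufficiently small that the discrete actor-critic updates track the continuous-time ODE (a standard two-time-scale stochastic-approximation argument).
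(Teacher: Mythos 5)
Your proposal follows essentially the same route as the paper's proof: both cast the (level-$k$) policy-gradient updates as a planar ODE, observe that the level-$0$ Jacobian at the interior mixed equilibrium is purely off-diagonal with $u_r u_c<0$ (purely imaginary eigenvalues, hence only marginal stability and cycling), and then show that the level-$k$ anticipation term $\zeta\,(\partial^2 V^i/\partial p^i\partial p^{-i})(\partial V^{-i}/\partial p^{-i})$ injects a strictly negative diagonal entry $\zeta u_r u_c$ into the Jacobian, after which a weighted quadratic Lyapunov function has strictly negative derivative along trajectories. The only cosmetic differences are that the paper exhibits the Lyapunov function explicitly as $\tfrac{1}{2}(u_c x^2 - u_r y^2)$ rather than invoking the Lyapunov matrix equation for a Hurwitz matrix, and it treats general $k$ by direct computation of the coefficient $\zeta u_r u_c\bigl(k-1+\cdots+\zeta^{k-1}(u_r u_c)^{k-2}\bigr)$ with the "mild condition" that $\zeta$ be small enough to keep this factor's sign, rather than by your proposed induction through Proposition~\ref{prop}.
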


\begin{proof}
We start by defining the matrix game that a mixed-strategy equilibrium exists, and they we show that on such game $level$-$0$ independent learner through iterated gradient ascent will not converge, and finally derive why the $level$-$k$ methods would converge in this case. Our tool is Lyapunov function and its stability analysis. 

Lyapunov function is used to verify the stability of a dynamical system in control theory, here we apply it in convergence proof for $level$-$k$ methods. It is defined as following:
\begin{definition}
(Lyapunov Function.) 
Give a function $F(x,y)$ be continuously differentiable in a neighborhood $\sigma$ of the origin. 
The function $F(x,y)$ is called the Lyapunov function for an autonomous system if that satisfies the following properties: 
\begin{enumerate}
  \item (nonnegative) $F(x,y)>0$ for all $(x,y) \in \sigma \string\ {(0,0)}$; 
  \item (zero at fixed-point) $F(0,0)=0$; 
  \item (decreasing) $\frac{\mathrm{d} F}{\mathrm{d} t} \leq 0$  for all $(x,y)\in \sigma$.
\end{enumerate}
\end{definition}

\begin{definition}
(Lyapunov Asymptotic Stability.) For an autonomous system, if there is a Lyapunov function $F(x,y)$ with a negative definite derivative $\frac{\mathrm{d} F}{\mathrm{d} t} < 0$ (strictly negative,  negative definite LaSalle's invariance principle) for all $(x,y) \in \sigma \string\ {(0,0)}$, then the equilibrium point $(x,y)=(0,0)$ of the system is asymptotically stable~\cite{marquez2003nonlinear}.    
\end{definition}

\noindent\textbf{Single State Game}

Given a two-player, two-action matrix game, which is a single-state stage game, we have the payoff matrices for row player and column player as follows:

$$
\label{eq:payoff_mp}
\mathbf{R}_r = \left[\begin{matrix}
r_{11} & r_{12} \\
r_{21} & r_{22}
\end{matrix}\right]
\quad
\text{and} 
\quad 
\mathbf{R}_c = \left[\begin{matrix}
c_{11} & c_{12} \\
c_{21} & c_{22}
\end{matrix}\right].
$$

Each player selects an action from the action space $\{1,2\}$ which determines the payoffs to the players. If the row player chooses action $i$ and the player 2 chooses action $j$, then the row player and column player receive the rewards $r_{ij}$ and $c_{ij}$ respectively. 
We use $\alpha\in [0,1] $ to represent the strategy for row player, where $\alpha$ corresponds to the probability of player 1 selecting the first action (action $1$), and $1-\alpha$ is the probability of choosing the second action (action $2$). Similarly, we use $\beta$ to be the strategy for column player. With a joint
strategy $(\alpha, \beta)$, the expected payoffs of players are:
%

$$
\begin{aligned} V_r(\alpha, \beta) 
=\alpha \beta r_{11}+\alpha(1-\beta) r_{12}+(1-\alpha) \beta r_{21}+(1-\alpha)(1-\beta) r_{22} ,
\end{aligned}
$$
$$
\begin{aligned} V_c(\alpha, \beta) 
=\alpha \beta c_{11}+\alpha(1-\beta) c_{12}+(1-\alpha) \beta c_{21}+(1-\alpha)(1-\beta) c_{22} .
\end{aligned}
$$

One crucial aspect to the learning dynamics analysis are the points of zero-gradient in the constrained dynamics, which they show to correspond to the equilibria which is called the center and denoted $(\alpha^*, \beta^*)$. This point can be found mathematically 
$
\left(\alpha^{*}, \beta^{*}\right)=\left(\dfrac{-b_c}{u_c}, \dfrac{-b_r}{u_r}\right)
$, where
$
u_r =r_{11}-r_{12}-r_{21}+r_{22}$, $b_r = r_{12}-r_{22}$, $ u_c =c_{11}-c_{12}-c_{21}+c_{22}$, and $b_c = c_{21}-c_{22} $.

Here we are more interested in the case that there exists a mixed strategy equilibrium, i.e., 
the location of the equilibrium point $(\alpha^*, \beta^*)$ is in the interior of the unit square, equivalently, it means $u_ru_c<0$. 
In other cases where the Nash strategy on the boundary of the unit square~\cite{marquez2003nonlinear,bowling2001convergence}, we are not going to discuss in this proof. 


\vspace{10pt}
\noindent\textbf{Learning in $level$-$0$ Gradient Ascent}

One common $level$-$0$ policy is Infinitesimal Gradient Ascent (IGA), which assumes independent learners and is a $level$-$0$ method, a player increases its expected payoff by moving its strategy in the direction of the current gradient with fixed step size. 
The gradient is then computed as the partial derivative of the agent's expected payoff with respect to its strategy, we then have the policies dynamic partial differential equations:

$$
\begin{aligned} 
\frac{\partial V_{r}(\alpha, \beta)}{\partial \alpha}
=u_r \beta +b_r, \quad
\frac{\partial V_{c}(\alpha, \beta)}{\partial \beta}
=u_c \alpha +b_c.
\end{aligned}
$$

 In the gradient ascent algorithm, a player will adjust its strategy after each iteration so as to increase its expected payoffs. This means the player will move their strategy in the direction of the current gradient with some step size. Then we can have dynamics are defined by the differential equation at time $t$:

$$
\left[ \begin{array}{c}{\nicefrac{\partial \alpha}{\partial t}} \\ {\nicefrac{\partial \beta}{\partial t}}\end{array}\right]=\underbrace{\left[ \begin{array}{cc}{0} & {u_r} \\ {u_c} & {0}\end{array}\right]}_{U}\left[\begin{array}{l}{\alpha} \\ {\beta}\end{array}\right]+\left[ \begin{array}{c}{b_r} \\ {b_c}\end{array}\right].
$$

By defining multiplicative matrix term $U$ above with off-diagonal values $u_r$ and $u_c$, we can classify the dynamics of the system based on properties of $U$. As we mentioned, we are interested in the case that the game has just one mixed center strategy equilibrium point (not saddle point) that in the interior of the unit square, which means $U$ has purely imaginary eigenvalues and $u_r u_c<0$~\cite{zhang2010multi}. 

 Consider the quadratic Lyapunov function which is continuously differentiable and $F(0,0)=0$ :
 
 $$
 F(x,y) = \frac{1}{2} (u_c x^2 - u_r y^2),
 $$
 
where we suppose $u_c > 0, u_r < 0$ (we can have identity case when $u_c < 0, u_r > 0$ by switching the sign of the function).
Its derivatives along the trajectories by setting $x = \alpha - \alpha^*$  and $y = \beta - \beta^*$ to move the the equilibrium point to origin can be calculated as:

%
%
%
%
%
%
%
%
%
$$
\begin{aligned}
\frac{\mathrm{d} F}{\mathrm{d} t}&=\frac{\partial F}{\partial x} \frac{\mathrm{d} x}{\mathrm{d} t}+\frac{\partial F}{\partial y} \frac{\mathrm{d} y}{\mathrm{d} t}
= xy(u_r u_c - u_r u_c)=0,
\end{aligned}
$$

where the derivative of the Lyapunov function is identically zero. Hence, the condition of asymptotic stability is not satisfied~\citep{marquez2003nonlinear,taylor2018lyapunov} and the IGA $level$-$0$ dynamics is unstable. There are some IGA based methods (WoLF-IGA, WPL etc.~\citep{bowling2002multiagent,abdallah2008multiagent}) with varying learning step, which change the $U$ to $\left[ \begin{array}{cc}{0} & {l_r(t)u_r} \\ {l_c(t)u_c} & {0}\end{array}\right]$. The time dependent learning steps $l_r(t)$ and $l_c(t)$ are chose to force the dynamics would converge. Note that diagonal elements in $U$ are still zero, which means
a player's personal influences to the system dynamics are not reflected on its policy adjustment.

\vspace{10pt}
\noindent\textbf{Learning in $level$-$k$ Gradient Ascent}

Consider a $level$-$1$ gradient ascent, where agent learns in term of $\pi_r(\alpha)\pi_c^1(\beta|\alpha)$, the gradient is computed as the partial derivative of the agent's expected payoff after considering the opponent will have $level$-$1$ prediction to its current strategy. We then have the $level$-$1$  policies dynamic partial differential equations:

$$
\begin{aligned} 
\frac{\partial V_{r}(\alpha, \beta_1)}{\partial \alpha}
=u_r (\beta +  \zeta \partial_{\beta} V_{c}(\alpha, \beta)))  +b_r, \quad
\frac{\partial V_{c}(\alpha_1, \beta)}{\partial \beta}
=u_c (\alpha+ \zeta \partial_{\alpha} V_{r}(\alpha, \beta)) +b_c,
\end{aligned}
$$

where $\zeta$ is short-term prediction of the opponent's strategy.
Its corresponding $level$-$1$ dynamic partial differential equations:

$$
\left[ \begin{array}{c}{\nicefrac{\partial \alpha}{\partial t}} \\ {\nicefrac{\partial \beta}{\partial t}}\end{array}\right]=\underbrace{\left[ \begin{array}{cc}{\zeta u_r u_c} & {u_r} \\ {u_{c}} & {\zeta u_r u_c}\end{array}\right]}_{U} \left[ \begin{array}{l}{\alpha} \\ {\beta}\end{array}\right]+\left[ \begin{array}{c}{\zeta u_r b_c + b_r} \\ {\zeta u_c b_r + b_c}\end{array}\right].
$$

Apply the same quadratic Lyapunov function: $ F(x,y) = 1/2 (u_c x^2 - u_r y^2)$, where $u_c > 0, u_r < 0$,
and its derivatives along the trajectories by setting $x = \alpha - \alpha^*$ and $y = \beta - \beta^*$ to move the coordinates of equilibrium point to origin:

%
%
%
%

$$
\begin{aligned}
\frac{\mathrm{d} F}{\mathrm{d} t}
= \zeta u_ru_c(u_cx^2-u_ry^2) + xy(u_ru_c-u_ru_c) = \zeta u_ru_c(u_cx^2-u_ry^2),
\end{aligned}
$$

where the conditions of asymptotic stability is satisfied due to $u_ru_c<0, u_c>0 $ and $u_r<0$, and it indicates the derivative $\frac{\diff F}{\diff t} < 0$. In addition, unlike the $level$-$0$'s case, we can find that the diagonal of $U$ in this case is non-zero, it measures the mutual influences between players after $level$-$1$ looks ahead and helps the player to update it's policy to a better position.

This conclusion can be easily extended and proved in $level$-$k$ gradient ascent policy ($k>1$) .
In $level$-$k$ gradient ascent policy, we can have the derivatives of same quadratic Lyapunov function in $level$-$2$ dynamics:

$$
\begin{aligned}
\frac{\mathrm{d} F}{\mathrm{d} t}
= \zeta u_ru_c(u_cx^2-u_ry^2) + xy\left(1 + \zeta^2u_ru_c\right)(u_ru_c-u_ru_c)=\zeta u_ru_c(u_cx^2-u_ry^2),
\end{aligned}
$$

and $level$-$3$ dynamics:

$$
\begin{aligned}
\frac{\mathrm{d} F}{\mathrm{d} t}
=\zeta u_ru_c(2 + \zeta^2u_ru_c)(u_cx^2-u_ry^2).
\end{aligned}
$$

Repeat the above procedures, we can easily write the general derivatives of quadratic Lyapunov function in $level$-$k$ dynamics:

$$
\begin{aligned}
\frac{\mathrm{d} F}{\mathrm{d} t}
&= \zeta u_ru_c(k-1 + \cdots +\zeta^{k-1}(u_ru_c)^{k-2})(u_cx^2-u_ry^2),
\end{aligned}
$$
where $k \geq 3$. These $level$-$k$ policies still owns the asymptotic stability property when $\zeta^2$ is sufficiently small (which is trivial to meet in practice) to satisfy $k -1 + \cdots + \zeta^{k-1}(u_ru_c)^{k-2} > 0$, which meets the asymptotic stability conditions, therefore coverages.
\end{proof}


%

%


\section{Proof of Proposition 1}
\label{nash_prop}
\begin{proposition}
In both the GR2-L \& GR2-M model, if the agents play pure strategies, once $level$-$k$ agent reaches a Nash Equilibrium, all higher-level agents will follow it too.  
\end{proposition}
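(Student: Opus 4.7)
The plan is to argue by induction on the reasoning level, using the defining property of a pure-strategy Nash equilibrium: each component is a best response to the others, and under pure strategies this best response is uniquely (up to ties) attained. Let $(a^{*i}, a^{*-i})$ denote the pure action profile played at level $k$ that by hypothesis constitutes a Nash equilibrium of the stage game, so $a^{*i} \in \arg\max_{a^i} Q^i(s, a^i, a^{*-i})$ for every agent $i$. The goal is to show $\pi^i_{k+m}$ puts all its mass on $a^{*i}$ for every $m \ge 1$, in both the GR2-L and GR2-M constructions.

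First I would handle GR2-L. The recursion in Eq.~\ref{recur} tells us that a level-$(k{+}1)$ agent $i$ marginalizes its conditional policy against its model $\rho^{-i}_{k}$ of a level-$k$ opponent. Under the inductive hypothesis the level-$k$ opponent policy is the pure strategy $\delta_{a^{*-i}}$, so the best-fit model from Eq.~\ref{eq:rho_pr2}, evaluated on the pure-strategy support, reduces to $\rho^{-i}_{k}(\cdot\mid s,a^{i}) = \delta_{a^{*-i}}$. Substituting this into Eq.~\ref{policy} and using the Nash property gives
\begin{equation*}
 \pi^{i}_{k+1}(\cdot\mid s) \;\propto\; \exp\bigl(Q^i(s,\cdot,a^{*-i})\bigr),
\end{equation*}
whose $\arg\max$ is exactly $a^{*i}$ by the NE assumption. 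Since we restrict to pure strategies, $\pi^i_{k+1}$ collapses to $\delta_{a^{*i}}$. The same argument applied with $k+1$ in place of $k$ then propagates the equilibrium upward to all levels $k+m$, completing the induction.

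Next I would extend the argument to GR2-M. Here the level-$(k{+}1)$ agent's belief about opponents is the Poisson-weighted mixture in Eq.~\ref{eq:chk-poisson} over levels $0,\ldots,k$. The inductive content to propagate is therefore stronger: not only does level-$k$ play $a^{*-i}$, but every level below $k$ that lies in the mixture also collapses to $a^{*-i}$ once the equilibrium is reached. This is natural to bake into the induction hypothesis since the hypothesis ``level-$k$ reaches a NE'' in the GR2-M construction already requires the lower-level summands entering $\pi^{i,\lambda}_{k}$ to be consistent with that NE (otherwise level-$k$ would not be best-responding to its own mixture). Assuming this, the mixture belief at level $k{+}1$ concentrates on $a^{*-i}$, the best-response integral collapses exactly as in the GR2-L case, and $\pi^{i,\lambda}_{k+1} = \delta_{a^{*i}}$.

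The step I expect to be the main obstacle is the GR2-M mixture case: one must be precise about what ``level-$k$ reaches a Nash equilibrium'' means when the level-$k$ policy is itself a Poisson mixture over lower-level strategies, and justify that the lower-level summands cannot perturb the best response away from the NE action. The cleanest route is to fold the collapse of all sub-level beliefs onto $a^{*-i}$ into the induction hypothesis and invoke the uniqueness of the pure-strategy best response at an NE; an alternative is to appeal to the ratio $f(k)/f(k-n) \propto k^{-n}$ noted after Eq.~\ref{eq:chk-poisson} to show that the mixture is dominated by its top component, but this only yields an approximate result rather than the exact pure-strategy statement the proposition asserts, so the induction-based approach is preferable. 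Corollary~\ref{col} (higher-level weak dominance) then drops out as an immediate consequence, since no higher level ever strictly improves past the NE payoff.
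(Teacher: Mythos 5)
Your GR2-L argument is essentially the paper's: the level-$(k{+}1)$ agent best responds to the level-$k$ opponent, who by hypothesis plays the Nash profile, so the best response is again the Nash action and the induction propagates. That half is fine (and in fact more explicit than the paper's own wording).

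The GR2-M half has a genuine gap. You propose to strengthen the induction hypothesis so that \emph{every} lower-level summand in the Poisson mixture collapses to $a^{*-i}$, and you justify this by claiming that ``level-$k$ reaches a NE'' already forces the lower-level summands to be consistent with that NE. That claim is false: level-$0$ is uniformly random by construction, and the intermediate levels are best responses to non-equilibrium beliefs; none of them need play $a^{*-i}$. The hypothesis only says that the \emph{best response to} the mixture of levels $0,\dots,k{-}1$ happens to be the Nash action --- a best response to a non-equilibrium mixture can perfectly well be the equilibrium action without any mixture component being at equilibrium. With your strengthened hypothesis the base case never holds, so the induction does not start. The paper closes this gap differently, and you do not need the collapse at all: the level-$(k{+}1)$ agent faces the mixture of levels $0,\dots,k$; the strategy $\pi^{i}_{k,*}$ is a best response to the sub-mixture of levels $0,\dots,k{-}1$ \emph{by the hypothesis itself} (that is exactly what level-$k$ computed), and it is a best response to the new level-$k$ component because that component plays the Nash profile (the GR2-L case). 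Since the expected payoff against a mixture is the convex combination $\sum_{\tilde k}\lambda_{\tilde k}\,\mathbb{E}[V^i(s;\cdot,\pi^{-i}_{\tilde k})]$ of the payoffs against its components, a strategy that simultaneously maximizes every term maximizes the sum, so $\pi^{i}_{k,*}$ is a best response to the full mixture. This linearity decomposition is the missing key step; your alternative suggestion via the Poisson ratio $f(k)/f(k-n)$ would, as you note yourself, only give an approximate statement and is not needed.
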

\begin{proof}
Consider the following two cases GR2-L and GR2-M.

\textbf{GR2-L.}
Since agents are assumed to play pure strategies, if a $level$-$k$ agent reaches the equilibrium, $\pi^{i}_{k,*}$, in the GR2-L model, then all the higher-level agents will play that equilibrium strategy too, i.e. $\pi^{-i}_{k+1,*} = \pi^{i}_{k,*} $. The reason is  because high-order thinkers will conduct at least the same amount of computations as the lower-order thinkers, and $level$-$k$  model only needs to best respond to $level$-$(k-1)$. On the other hand, as it is  showed by the Eq. 3 in the main paper, higher-level recursive model contains the lower-level models by incorporating it into the inner loop of the integration.

\textbf{GR2-M.}
In the GR2-M model, if the $level$-$k$ step agent play the equilibrium strategy $\pi^{i}_{k,*}$, it means the agent finds the best response to a mixture type of agents that are among $level$-$0$ to $level$-$(k-1)$. Such strategy $\pi^{i}_{k,*}$ is at least weakly dominant over other pure strategies. For $level$-$(k+1)$ agent, it will face a mixture type of $level$-$0$ to $level$-$(k-1)$, plus $level$-$k$.

For mixture of $level$-$0$ to $level$-$(k-1)$, the strategy $\pi^{i}_{k,*}$ is already the best response by definition. For $level$-$k$, $\pi^{i,*}_{k}$ is still the best response due to the conclusion in the above GR2-L.  Considering the linearity of the expected reward for GR2-M:

$$
\small{
\begin{aligned}
    \mathbb { E }[\lambda_0V^i(s; \pi_{0,*}^i, \pi^{-i}) + \cdots + \lambda_k V^i(s; \pi_{k,*}^i, \pi^{-i})] = \lambda_0\mathbb { E }[V^i(s; \pi_{0,*}^i, \pi^{-i})] + \cdots +\lambda_k \mathbb { E }[V^i(s; \pi_{k,*}^i, \pi^{-i})],
\end{aligned}
}
$$

where $\lambda_k$ is $level$-$k$ policy's proportion.
Therefore, $\pi^{i,*}_{k}$ is the best response to the mixture of $level$-$0$ to $level$-$k$ agent, i.e. the best response for $level$-$(k+1)$ agent. Given that $\pi^{i,*}_{k}$ is the best response to both $level$-$k$ and all lower levels from $0$ to $(k-1)$, it is therefore the best response of the $level$-$(k+1)$ thinker.

Combining the above two results, therefore, in GR2, once a $level$-$k$ agent reaches a pure Nash strategy, all higher-level agents will play it too. 

\end{proof}
%
%
%
%

\section{Detailed Settings for Experiments}
\label{exp_detail}

\subsection{The Recursive Level}
We regard DDPG, DDPG-OM, MASQL, MADDPG as $level$-$0$ reasoning models because from the policy level, they do not explicitly model the 
impact of one agent's action on the other agents or consider the reactions from the other agents. Even though the value function of the joint policy is learned in MASQL and MADDPG,  but they conduct a \emph{non-correlated factorization} \cite{wen2018probabilistic} when it comes to each individual agent's policy.  
PR2 and DDPG-ToM are in fact the $level$-$1$ reasoning model, but  note that the $level$-$1$ model in GR2 stands for $\pi _ {1} ^ {i } ( a ^ { i } | s ) = \int_{a^{-i}} \pi _ {1} ^ {i } ( a ^ { i } | s, a ^ { -i } )  \rho  _ {0}  ^ { -i } (  a ^ { -i } | s ) \diff a^{-i}$, while the $level$-$1$ model  in PR2 starts from the opponent's angel, that is $\rho _ {1} ^ {-i } ( a ^ { -i } | s ) =  \int_{a^{i}}\rho _ {1} ^ {i } ( a ^ { -i } | s, a ^ { i } ) \pi  _ {0}  ^ { i } (  a ^ { i } | s )\diff a^{i}$.

\subsection{Hyperparameter Settings}
In all the experiments, we have the following parameters.
The Q-values are updated using Adam with learning rate $10^{-4}$.
The DDPG policy and soft Q-learning sampling network use Adam with a learning rate of $10^{-4}$.
The methods use a replay pool of size $100k$. Training does not start until the replay pool has at least $1k$ samples. The batch size 64 is used.
All the policies and $Q$-functions are modeled by the MLP with $2$ hidden layers followed by ReLU activation. In matrix games and Keynes Beauty Contest, each layer has $10$ units and $100$ units are set in cooperative navigation's layers.
In the actor-critic setting, we set the exploration noise to $0.1$ in the first $1k$ steps. The annealing parameter in soft algorithms is decayed in linear scheme with training step grows to balance the exploration.
Deterministic policies additional OU Noise to improve exploration with parameters $\theta=0.15$ and $\sigma=0.3$. 
We update the target parameters softly by setting target smoothing coefficient to $0.001$. 
We train with 6 random seeds for all environments. 
In  Keynes Beauty Contest, we train all the methods for $400$ iterations with $10$ steps per iteration. 
In the matrix games, we train the agents for $200$ iterations with $25$ steps per iteration. 
For the cooperative navigation, all the models are trained up to $300k$ steps with maximum $25$ episode length.

\subsection{Ablation Study}
The results in the experiment section of the main paper  suggest that GR2 algorithms can outperform other multi-agent RL methods various tasks. 
In this section, we examine how sensitive GR2 methods is to some of the most important hyper-parameters, including the $level$-$k$ and the choice of the poisson mean $\lambda$ in GR2-M methods, as well as the influences of incentive intensity in the games.

\paragraph{Choice of $k$ in  $level$-$k$ Models.}

\begin{figure}[ht!]
     \centering
     \begin{subfigure}[b]{.4\textwidth}
         \centering
         \includegraphics[width=\textwidth]{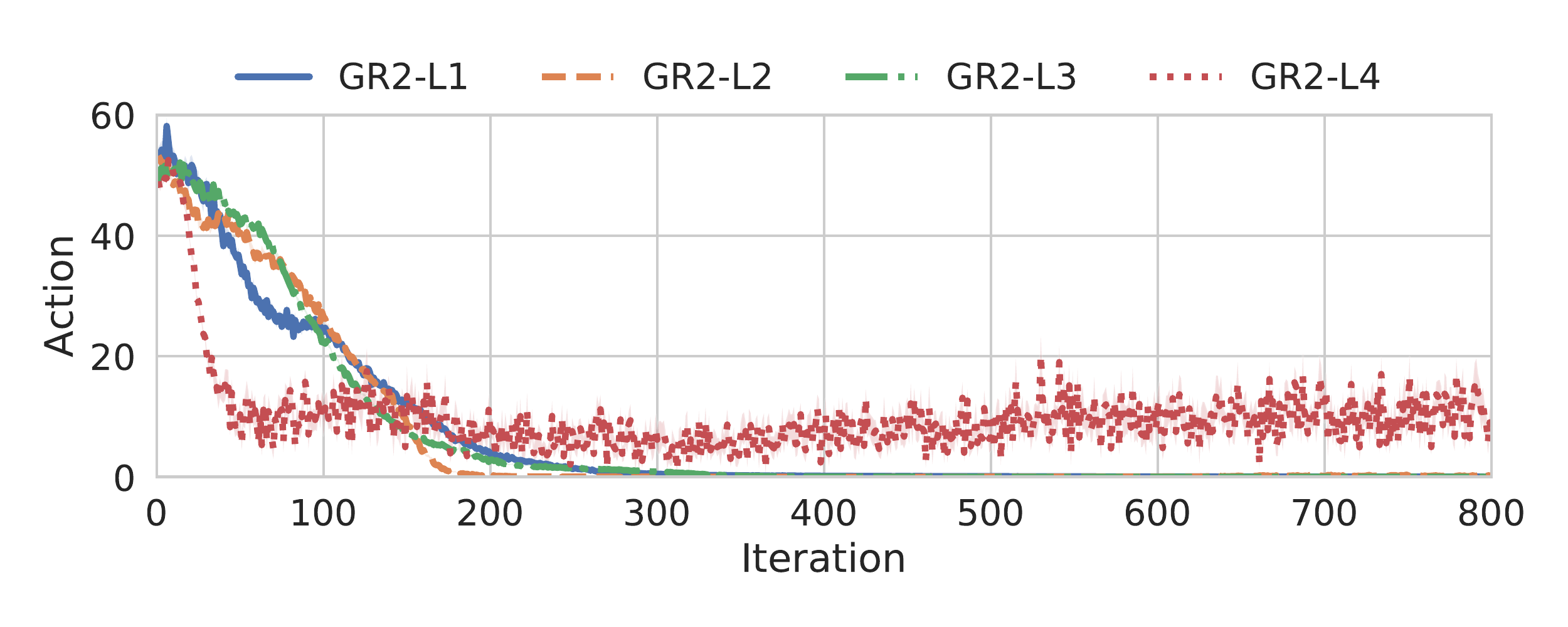}
         \caption{$p=0.7,n=10$}
         \label{abla_fig:ish}
     \end{subfigure}
     \begin{subfigure}[b]{.4\textwidth}
         \centering
         \includegraphics[width=\textwidth]{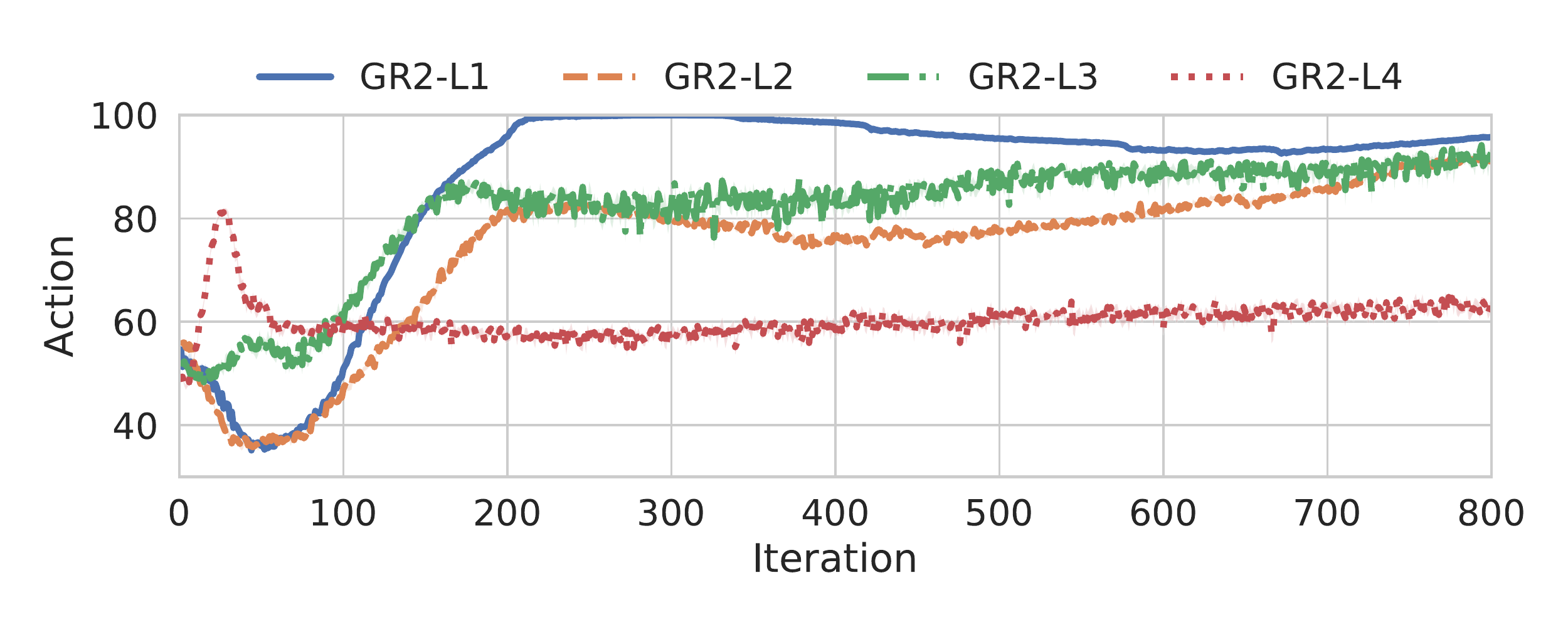}
         \caption{$p=1.1,n=10$}
         \label{fig:ipd}
     \end{subfigure}
     \caption{Learning curves on  Keynes Beauty Contest game with GR2-L policies from $level$-$1$ to $level$-$4$.}
     \label{abla_fig:pbeauty_k}
\end{figure}

First, we investigate the choice of $level$-$k$ by testing the GR2-L models with various $k$ on  Keynes Beauty Contest. 
According to the Fig. \ref{abla_fig:pbeauty_k}, in both setting, the GR3-L with level form $1-3$ can converge to the equilibrium, but the GR3-L4 cannot. 
The learning processes show that the GR3-L4 models have high variance during the learning. This phenomenon has two reasons: with k increases, the reasoning path would have higher variance; and in GR2-L4 policy, it uses the approximated opponent conditional policy $\rho^{-i}(a^{-i}|s, a^{i})$ twice (only once in GR2-L2/3), which would further amplify the variance. 
\\
\\
\\
\paragraph{Choice of $\lambda$ of Poisson Distribution in GR2-M.}

\begin{figure}[H]
     \centering
     \begin{subfigure}[b]{.4\textwidth}
     \centering
         \includegraphics[width=1.\textwidth]{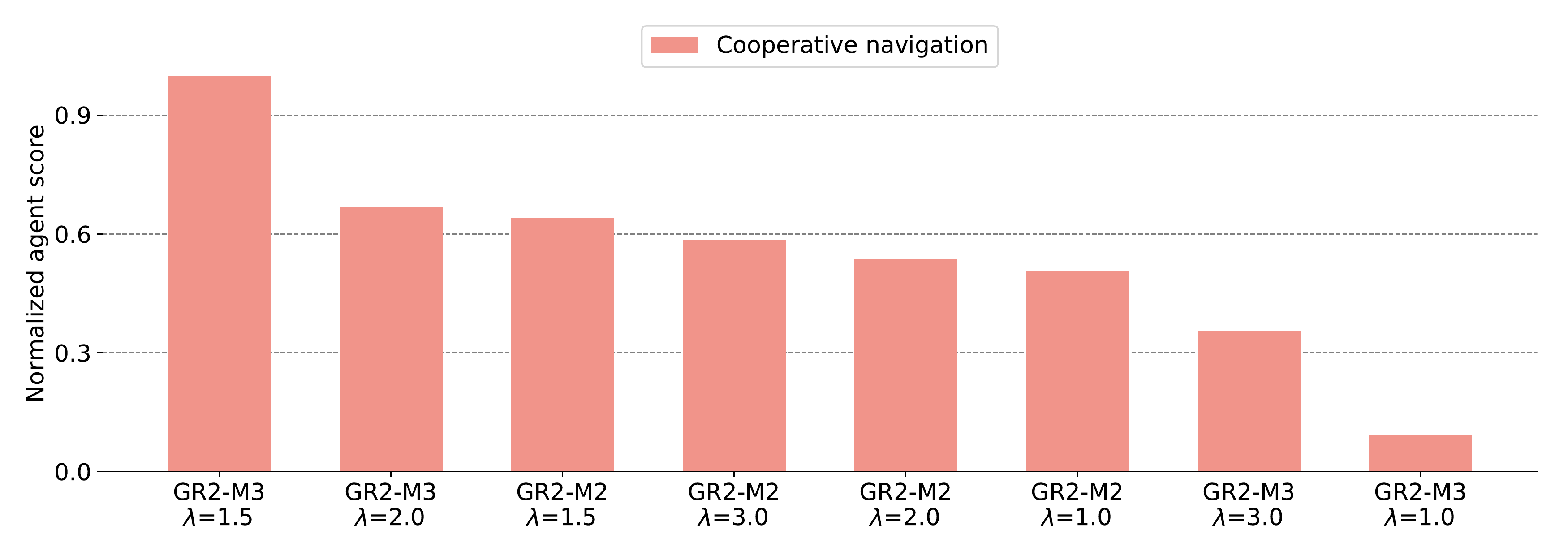}
     \caption{}
     \label{fig:coop_lambda}
     \end{subfigure}
     \begin{subfigure}[b]{.4\textwidth}
     \centering
         \includegraphics[width=1.\textwidth]{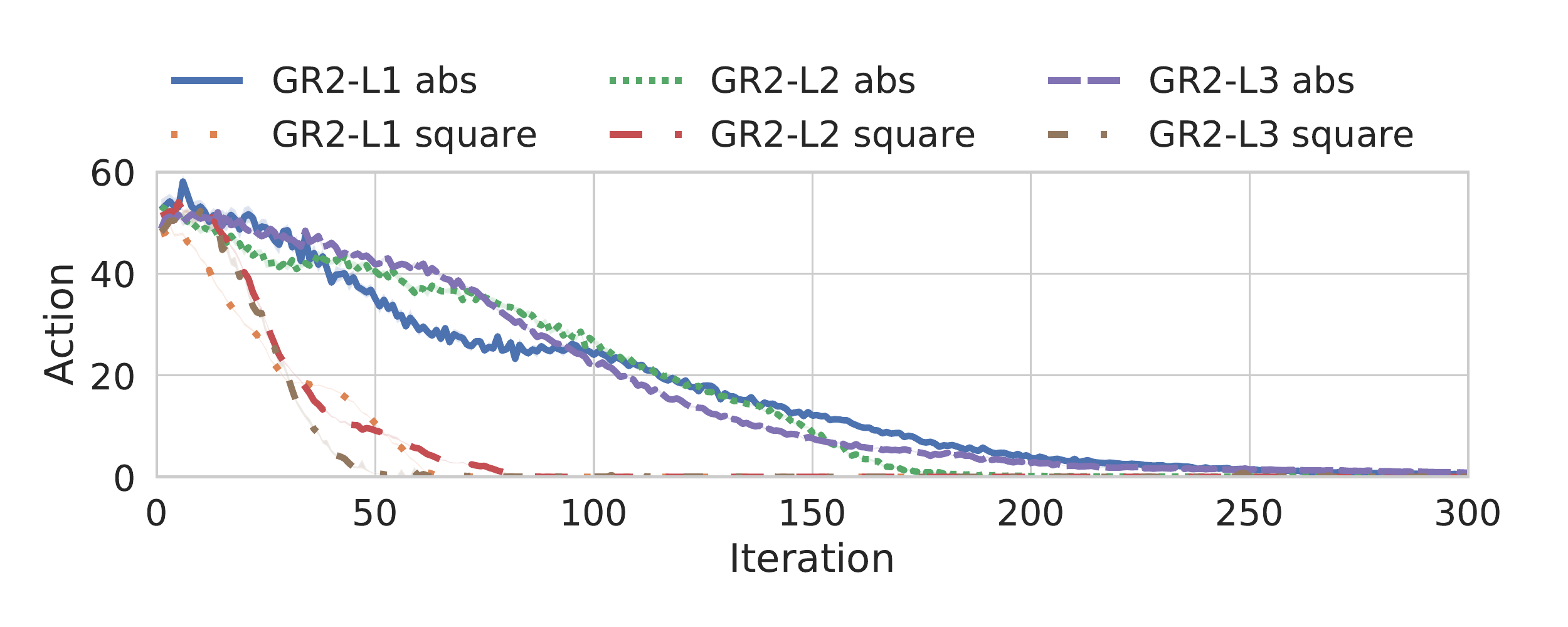}
     \caption{}
     \label{fig:pbeauty_reward}
     \end{subfigure}
     \caption{(a)Effect of varying $\lambda$ in GR2-M methods, the score is normalized to $0-1$. (b) Learning curves with two reward schemes: absolute difference (default) and squared absolute difference.}
\end{figure}

We investigate the effect of hyper-parameter $\lambda$ in the GR2-M models.
We test the  GR2-M model on the cooperative navigation game; empirically, the test selection of $\lambda=1.5$ on both GR2-M3 and GR2-M2 would lead to best performance. We therefore use $\lambda=1.5$ in the experiment section in the main paper. 

\paragraph{Choice of Reward Function in Keynes Beauty Contest.}
\label{rewards}

One sensible finding from human players  suggests that 
when prize of winning gets higher, people tend to use more steps of reasoning and they may think others will think harder too. We simulate a similar scenario by  reward shaping.  We consider two reward schemes of absolute difference and squared absolute difference. Interestingly, we find similar phenomenon in Fig.~\ref{fig:pbeauty_reward} that the amplified reward can significantly speed up the convergence for GR2-L methods.

\end{document}